\newtheorem{theorem}{Theorem}
\DeclareMathOperator*{\argmax}{arg\,max}
\begin{document}
% \title{Magnetic Field-Based Reward Shaping for Dynamic Goal-Oriented Tasks}
\title{Magnetic Field-Based Reward Shaping for Goal-Conditioned Reinforcement Learning}
\author{Hongyu Ding,~Yuanze Tang,~Qing Wu,~Bo Wang,~\textit{Member, IEEE},~Chunlin~Chen,~\textit{Senior Member, IEEE},~\newline Zhi Wang,~\textit{Member, IEEE}
	
\thanks{This work is accepted by \textit{IEEE-CAA Journal of Automatica Sinica}, 2023, DOI: 10.1109/JAS.2023.123477.}

\thanks{This work was supported in part by the National Natural Science Foundation of China under Grant 62006111 and Grant 62073160, and in part by the Natural Science Foundation of Jiangsu Province of China under Grant BK20200330. \textit{(Corresponding author: Zhi Wang.)}}
\thanks{H. Ding, B. Wang, C. Chen, and Z. Wang are with the Department of Control Science and Intelligence Engineering, School of Management and Engineering, Nanjing University, Nanjing 210093, China (email: hongyuding@smail.nju.edu.cn; \{bowangsme, clchen, zhiwang\}@nju.edu.cn).}
\thanks{Y. Tang and Q. Wu are with the Department of Power Engineering and Process Machinery, School of Mechanical and Power Engineering, East China University of Science and Technology, Shanghai 200237, China (email: yztang@mail.ecust.edu.cn; qwu@ecust.edu.cn).}
}

\maketitle

\begin{abstract}
Goal-conditioned reinforcement learning (RL) is an interesting extension of the traditional RL framework, where the dynamic environment and reward sparsity can cause conventional learning algorithms to fail.
Reward shaping is a practical approach to improving sample efficiency by embedding human domain knowledge into the learning process.
Existing reward shaping methods for goal-conditioned RL are typically built on distance metrics with a linear and isotropic distribution, which may fail to provide sufficient information about the ever-changing environment with high complexity.
This paper proposes a novel magnetic field-based reward shaping (MFRS) method for goal-conditioned RL tasks with dynamic target and obstacles.
Inspired by the physical properties of magnets, we consider the target and obstacles as permanent magnets and establish the reward function according to the intensity values of the magnetic field generated by these magnets.
The nonlinear and anisotropic distribution of the magnetic field intensity can provide more accessible and conducive information about the optimization landscape, thus introducing a more sophisticated magnetic reward compared to the distance-based setting.
Further, we transform our magnetic reward to the form of potential-based reward shaping by learning a secondary potential function concurrently to ensure the optimal policy invariance of our method.
Experiments results in both simulated and real-world robotic manipulation tasks demonstrate that MFRS outperforms relevant existing methods and effectively improves the sample efficiency of RL algorithms in goal-conditioned tasks with various dynamics of the target and obstacles.

\end{abstract}

\begin{IEEEkeywords}
Dynamic environments, goal-conditioned reinforcement learning, magnetic field, reward shaping.
\end{IEEEkeywords}

\section{Introduction}~\label{introduction}
\IEEEPARstart{R}{einforcement} learning (RL)~\citep{sutton2018reinforcement} is a general optimization framework of how an autonomous active agent learns an optimal behavior policy that maximizes the cumulative reward while interacting with its environment in a trial-and-error manner.
Traditional RL algorithms, such as dynamic programming~\citep{bellman1966dynamic}, Monte Carlo methods~\citep{tesauro1996line}, and temporal difference learning~\citep{watkins1992q}, have been widely applied to Markov decision processes (MDPs) with discrete state and action space~\citep{wang2019incremental, chen2013fidelity, luo2016model, zhang2016fmrq}. 
In recent years, with advances in deep learning, RL combined with neural networks has led to great success in high-dimensional applications with continuous space~\citep{zhu2016iterative, xu2007kernel, zheng2018deep, yu2018reusable, wang2022dirichlet}, 
% such as video games~\citep{mnih2015human, vinyals2019grandmaster, pan2018multisource}, the game of Go~\citep{silver2017mastering}, robotic manipulation tasks~\citep{duan2016benchmarking, hwangbo2019learning}, and autonomous driving~\citep{li2019reinforcement}.
such as video games~\citep{mnih2015human, vinyals2019grandmaster, pan2018multisource}, the game of Go~\citep{silver2017mastering}, robotics~\citep{duan2016benchmarking,wang2022lifelong}, and autonomous driving~\citep{li2019reinforcement}.

While RL has achieved significant success in various benchmark domains, its deployment in real-world applications is still limited since the environment is usually dynamic, which can cause conventional learning algorithms to fail.
Goal-conditioned reinforcement learning~\citep{kaelbling1993learning, schaul2015universal, andrychowicz2017hindsight} is a typical dynamic environment setting where the agent is required to reach ever-changing goals that vary for each episode by learning a general goal-conditioned policy.
The universal value function approximator (UVFA)~\citep{schaul2015universal} makes it possible by letting the Q-function depend on not only a state-action pair but also a goal.
%\textcolor{black}{In this paper, we further introduce obstacles to the setting of the goal acting as ``anti-goals'' that the agent should avoid in goal-conditioned RL, contrary to the target, the agent should reach.}
\textcolor{black}{In this paper, we treat the obstacle as an opposite type of the goal in goal-conditioned RL, i.e., ``anti-goal" that the agent should avoid.}
Following UVFA~\citep{schaul2015universal}, we can naturally give a positive reward when the agent reaches the target, a negative one when hitting on the obstacles, and 0 otherwise.
% However, the resulting sparse rewards make it extremely difficult for the agent to obtain valid information about the dynamic environment, which leads to poor sample efficiency.
\textcolor{black}{However, the resulting sparse rewards make it difficult for the agent to obtain valid information and lead to poor sample efficiency~\citep{marom2018belief, dann2019deriving}.
Especially in the dynamic environment, the problem is more severe since the inherently sparse reward is constantly changing over time~\citep{andrychowicz2017hindsight}.}

% To address the sample efficiency issue generated by sparse reward, a common and effective method is reward shaping~\citep{mataric1994reward}.
Reward shaping~\citep{ng1999policy, wiewiora2003principled, devlin2012dynamic, harutyunyan2015expressing} effectively addresses the sample efficiency problem in sparse reward tasks without changing the original optimal policy.
% It adds an extra shaping reward to the native reward and the policy is updated by maximizing the cumulative shaped reward, which is the sum of shaping reward and native reward. 
% Reward shaping is able to offer the agent additional and dense information about the specific task with human domain knowledge.
\textcolor{black}{It adds an extra shaping reward to the original environment reward to form a new shaped reward that the agent applies to update policy, offering additional dense information about the environment with human domain knowledge.}
% Then, the policy is updated by maximizing the shaped reward, i.e., the sum of the native and shaping reward.
% Potential-Based Reward Shaping (PBRS)~\citep{ng1999policy} defines the form of shaping reward function as the difference between the potential function of the new state and the old state, which has the property that the optimal policy learned from the shaped reward function is the same as the one from the native reward function, also known as the optimal policy invariance.
Potential-based reward shaping (PBRS)~\citep{ng1999policy} defines the strict form of the shaping reward function as the difference between the potential functions of the successor and the current state. It ensures that the optimal policy learned from the shaped reward is consistent with the one learned from the original reward, also known as the optimal policy invariance property.
% The potential function in shaping reward is also need to be designed by human, and hence reward shaping is often problem-specific and requires domain-expertise~\citep{mataric1994reward}.

% For dynamic goal-oriented tasks, the shaping reward function in existing relevant methods is mostly set according to distance (e.g. the Euclidean distance between the agent and goal). 
The shaping reward in existing methods is typically set according to a distance metric for goal-conditioned RL, e.g., the Euclidean distance between the agent and goal~\citep{trott2019keeping}.
% However, due to the linear and isotropic distribution of distance, distance-based shaping reward may fail to provide the agent with adequate information about the dynamic goal-oriented tasks with high complexity. 
However, due to the linear and isotropic properties of the distance metrics, it may fail to provide adequate information about a dynamic environment with high complexity. 
% To be specific, under distance-based setting, though the agent will receive higher reward if it moves closer to the goal and away from the obstacles, the derivative of reward function with respect to the position of the agent is a constant, which will not be able to give further valid information about the ever-changing environment to the agent. 
Specifically, the derivative of the linear distance-based shaping reward is a constant, which cannot provide higher-order information on the optimization landscape.
% In this case, the agent only know ``where'' to go but doesn't know ``how'' to go.
In this case, the agent only knows ``where'' to go, rather than ``how'' to go.
% Therefore, the limited information provided by the distance-based shaping reward with linear distribution may also cause the problem of sample efficiency when the tasks become more complex in dynamic environments.
On the other hand, the isotropic property also exacerbates this problem. 
% Consider a 2D plain with the initial state of the agent and a reachable goal, let $\Omega_d$ denotes the points that have the same Euclidean distance $d$ from the goal, where $d$ is less than the Euclidean distance between the goal and the initial state.
% Then all the points in $\Omega_d$ share a common reward value calculated by distance-based shaping reward.
Considering a 2D plain with an initial state and a target, the points with the same Euclidean distance from the target will share a same value of potential or shaping reward under distance-based setting.
% However, according to human knowledge, the points towards the initial state with shorter distance to the agent in $\Omega_d$ should be given higher reward since the optimal policy is supposed to be the steepest path from the initial state to the goal. 
% Unfortunately, it is the case that the distance-based shaping reward with isotropic distribution can never satisfy.
However, the points between the initial state and target should be given higher values since the optimal policy is supposed to be the steepest path from the initial state to the target.
% Consequently, it is necessary to develop a more sophisticated method with the ability of providing additional and sufficient information about the position and orientation of the ever-changing goal and obstacles for the agent in goal-oriented tasks.
Consequently, it is necessary to develop a more sophisticated reward shaping method that can provide sufficient and conducive information for goal-conditioned tasks.

Intuitively, the agent should be attracted to the target and repelled by the obstacles, which is similar to the physical phenomenon that magnets of the same polarity repel each other while those of different polarities attract each other.
% Therefore, we propose a magnetic filed-based reward shaping (MFRS) method for dynamic goal-oriented tasks where we consider the dynamic goal and obstacles as permanent magnets and build the shaping reward function according to the intensity of magnetic filed generated by these permanent magnets.
Inspired by this, we propose a magnetic field-based reward shaping (MFRS) method for goal-conditioned RL tasks. We consider the dynamic target and obstacles as permanent magnets, and build the reward function according to the intensity values of the magnetic field generated by these magnets.
% The nonlinear and anisotropic distribution of magnetic field intensity enables the shaping reward function to carry more accessible information than distance-based setting that contributes to improve the efficiency of exploration and the performance of learning algorithms, which will be discussed in detail in section \uppercase\expandafter{\romannumeral3}.
The nonlinear and anisotropic properties of the generated magnetic field provide a sophisticated reward function that carries more accessible and sufficient information about the optimization landscape than in the distance-based setting.
% Furthermore, we introduce intensity standardization mechanism in a concurrently learning manner in order to unify the outputs distribution of calculation functions for different permanent magnets. 
Besides, we use several normalization techniques to unify the calculated values of magnetic field intensity and prevent the value exploding problem.
% Then we define the form of intergrated magnetic reward function, and transform it into a potential-based one using DPBA algorithm with the proof of the guarantee of optimal policy invariance under our setting for goal-oriented tasks with dynamic goal and obstacles.
Then, we define the form of our magnetic reward function and transform it into a potential-based one by learning a secondary potential function concurrently to ensure the optimal policy invariance of our method.
% Then, we employ the DPBA method to transform our magnetic reward into the form of potential-based reward shaping to ensure the optimal policy invariance property of our method.

% In summary, our main contributions lie in the following aspects.
In summary, our main contributions are as follows.
\begin{enumerate}
% \item We propose a novel reward shaping method MFRS for goal-oriented RL tasks with dynamic goal and obstacles.
\item We propose a novel magnetic field-based reward shaping (MFRS) method for goal-conditioned RL tasks with dynamic target and obstacles.
% \item We introduce intensity standardization mechanism to unify the outputs distribution of calculation functions for different permanent magnets, and define the form of intergrated magnetic reward function.
% \item We transform our magnetic reward into the form of potential-based reward shaping by learning a secondary potential function concurrently to ensure the property of optimal policy invariance.
% \item We build a 3-D robotic manipulation environment based on CoppeliaSim and set up three different goal-oriented tasks with varying configurations. Empirical studies verify that MFRS achieves a faster convergence and better stability than other existing reward shaping methods based on distance.
\item We build a 3-D simulated robotic manipulation platform and verify the superiority of MFRS on a set of tasks with various goal dynamics.
\item We apply MFRS to a physical robot in the real-world environment and demonstrate the effectiveness of our method.
\end{enumerate}
	
The rest of this paper is organized as follows. 
Section~\ref{prelimilaries} introduces the preliminaries of goal-conditioned RL and related work.
In Section~\ref{method}, we first present the overview of MFRS, followed by specific implementations in detail and the final integrated algorithm.
Simulated experiments on several robotic manipulation tasks with the dynamic target and obstacles are conducted in Section~\ref{experiments}.
Section~\ref{sim2real} demonstrates the application of our method to a real robot, and Section~\ref{conclusion} presents our concluding remarks.

\section{Preliminaries and Related Work}\label{prelimilaries}
\subsection{Goal-Conditioned Reinforcement Learning}
We consider a discounted infinite-horizon goal-conditioned Markov decision process (MDP) framework, defined by a tuple $\langle\mathcal{S}, \mathcal{G}, \mathcal{A}, \mathcal{T}, R, \gamma\rangle$, where $\mathcal{S}$ is the set of states, $\mathcal{G}$ is the set of goals which is the subset of states $\mathcal{G}\subseteq\mathcal{S}$, $\mathcal{A}$ is the set of actions, $\mathcal{T}: \mathcal{S}\times \mathcal{A}\times \mathcal{S}\to [0,1]$ is the state transition probability, $R: \mathcal{S}\times \mathcal{A}\times \mathcal{G} \to\mathbb{R}$ is the goal-conditioned reward function, and $\gamma$ is a discount factor $\gamma\in (0,1]$.

We aim to learn a goal-conditioned policy that can achieve multiple goals simultaneously. Considering a goal-reaching agent interacting with the environment, in which every episode starts with sampling an initial state $s_0\sim \rho_0$ and a goal $g\sim \rho_g$, where $\rho_0$ and $\rho_g$ denote the distribution of the initial state and goal, respectively. At every timestep $t$ of the episode, the agent selects an action according to a goal-conditioned policy $a_t\sim \pi(\cdot|s_t,g)$. Then it receives an instant reward $r_t = r(s_t, a_t, g)$ that indicates whether the agent has reached the goal, and the next state is sampled from the distribution of state transition probability $s_{t+1}\sim p(\cdot|s_t, a_t)$.
% The return is defined as the discounted cumulative rewards $R_t = \sum_{i=t}^{\infty} \gamma^{i-t}r_i$.

Formally, the objective of goal-conditioned RL is to find an optimal policy $\pi^{*}$ that maximizes the expected return, which is defined as the discounted cumulative rewards:
\begin{equation}
    J(\pi) = \mathbb{E}_{g\sim \rho_g, \tau\sim d^{\pi}(\cdot|g)} \left[\sum_{t=0}^{\infty}\gamma^{t}r_{t} \right],
\end{equation}
under the distribution
\begin{equation}
    d^{\pi}(\tau|g) = \rho_0(s_0)\prod_{t=0}^{\infty}\pi(a_t|s_t, g)p(s_{t+1}|s_t,a_t),
\end{equation}
where $\tau = (s_0, a_0, s_1, a_1, ...)$ is the learning episode. Based on UVFA~\citep{schaul2015universal}, goal-conditioned RL algorithms rely on the appropriate estimation of goal-conditioned action-value function $Q^{\pi}(s,a,g)$, which describes the expected return when performing action $a$ in state $s$, goal $g$ and following $\pi$ after:
\begin{equation}
Q^{\pi}(s_t, a_t, g) = \mathbb{E}_{(r_{i\geq t}, s_{i>t})\sim E, (a_{i>t})\sim\pi, g\sim\rho_{g}}\left[\sum_{i=t}^{\infty} \gamma^{i-t}r_i\right],
\end{equation}
where $E$ stands for the environment.

% and value function $V^{\pi}$ defined as $Q^{\pi}(s,a,g) = \mathbb{E}_{s_0=s, a_0=a, \tau\sim d^{\pi}(\cdot|g)}[\sum_{t=0}^{\infty} \gamma^{t}r(s_t, a_t, g)]$ and $V^{\pi}(s,g) = \mathbb{E}_{a\sim \pi(\cdot|s,g)}Q^{\pi}(s,a,g)$.

In this work, we follow the standard off-policy actor-critic framework~\citep{silver2014deterministic, heess2015learning, mnih2016asynchronous, fujimoto2018addressing, haarnoja2018soft}, where a replay buffer $D$ is used to store the transition tuples $(s_{t}, a_{t}, s_{t+1}, r_t, g)$ as the experience for training.
The policy is referred as the actor and the action-value function as the critic, which can be represented as parameterized approximations using deep neural network (DNN) $\pi_{\bm{\theta}}(s,g)$ and $Q_{\bm{\phi}}(s,a,g)$, where $\bm{\theta}$ and $\bm{\phi}$ denote the parameters of the actor-network and critic-network, respectively. 
% We consider the environments with continuous state and action space. 

Actor-critic algorithms maximize the expected return by alternating between policy evaluation and policy improvement. 
\textcolor{black}{In the policy evaluation phase, the critic estimates the action-value function of the current policy $\pi_{\bm{\theta}}$, and the objective for the critic is minimizing the square of the Bellman error \citep{bellman1966dynamic}:
\begin{equation}
L(\bm{\phi}) = \mathbb{E}_{(s_{t},a_{t},s_{t+1},r_{t},g)\sim D} \left[y_{t} - Q_{\bm{\phi}}(s_{t},a_{t},g)\right]^2,
\end{equation}
where $y_{t} = r_{t} + \gamma\mathbb{E}_{a_{t+1}\sim\pi_{\bm{\theta}}(\cdot|s_t,g)}\left[Q_{\bm{\phi}}(s_{t+1},a_{t+1},g)\right]$.
In the policy improvement phase, the actor is optimized by maximizing the expected action-value function, and thus the objective function can be written as
\begin{equation}
J(\bm{\theta}) = \mathbb{E}_{(s_t,g)\sim D, a_t\sim\pi_{\bm{\theta}}(\cdot|s_t,g)} \left[Q_{\bm{\phi}}(s_t,a_t,g)\right].
\end{equation}
Hereafter, a gradient descent step $\bm{\phi}\leftarrow \bm{\phi} - \beta\nabla_{\bm{\phi}}L(\bm{\phi})$ and a gradient ascent step $\bm{\theta}\leftarrow \bm{\theta} + \alpha\nabla_{\bm{\theta}}J(\bm{\theta})$ can be taken to update the parameters of the critic network and actor network, respectively, where $\beta$ and $\alpha$ denote the learning rates.
The minimizing and maximizing strategies continue until $\bm{\theta}$ and $\bm{\phi}$ converge~\citep{williams1992simple,duan2016benchmarking}.}

% In practice, we will use Deep Deterministic Policy Gradient (DDPG)~\citep{lillicrap2015continuous} as the base algorithm to evaluate our method, and hence the agent produces a deterministic policy $a_{t} = \pi_{\bm{\theta}}(s_t,g)$. 
% In order to increase the complexity of the goal-conditioned RL tasks, we introduce obstacles into the setting of the goal acting as ``anti-goals'' that the agent should avoid, which is contrary to the target for reaching in goal-conditioned RL.
% Therefore, the goal in our setting is defined as: $g' := (g||o_1||o_2||...||o_N)$, where $g$ is the target, $o_1,o_2,...,o_N$ are $N$ obstacles and $||$ denotes concatenation. The agent is required to reach the goal while avoiding the obstacles.
% Therefore, the goal in our setting is defined as the concatenation of the target and obstacles, and the agent is required to reach the target while avoiding the obstacles.

\subsection{Related Work}
% Goal-conditioned reinforcement learning aims to either learn an universal policy that can master reaching multiple goals simultaneously, or generate sub-goals to decompose the long-term tasks into simpler problems so as to guide and accelerate policy learning, or achieve both of them~\citep{liu2022goal}. Our work mainly focus on the first one.
Goal-conditioned RL aims to learn a universal policy that can master reaching multiple goals simultaneously~\citep{liu2022goal}.
Following UVFA~\citep{schaul2015universal}, the reward function in goal-conditioned RL is typically a simple unshaped sparse reward, indicating whether the agent has reached the goal. Therefore, the agent can gain nothing from the environment until eventually getting into the goal region, which usually requires a large number of sampled episodes and brings about the problem of sample inefficiency, especially in complex environments such as robotic manipulation tasks.

A straightforward way to alleviate the sample efficiency problem is to replace the reward with a distance measure between the agent's current state and goal.
% For instance, \citet{wu2018laplacian} represented the state and goal with the approximated eigenvectors of the Laplacian, and the reward function is defined as the Euclidean distance between their vectors.
% \citet{ghosh2018learning} introduced the actionable representations, which can capture the elements of the state that are important for decision making, and generated the reward based on learned representations.
% \citet{warde2018unsupervised} considered the mutual information as a distance metric, and learned a reward function by maximizing the mutual information to measure how similar a state is to the goal state.
% \citet{hartikainen2019dynamical} proposed to learn the expected number of timesteps to reach the goal from any given state as the distance, and use it to provide well-shaped reward function. 
% \citet{durugkar2021adversarial} estimated the Wasserstein-1 distance between the distribution of the state and goal based on timestep metric, and use it to generate rewards.
\citep{wu2018laplacian, ghosh2018learning, warde2018unsupervised, hartikainen2019dynamical, durugkar2021adversarial} propose different distance metrics to provide the agent with an accurate and informative reward signal indicating the distance to the target. 
However, they cannot guarantee that the optimal learned policy with the shaped reward will be the same as the original reward, which yields an additional local optima problem~\citep{trott2019keeping}.
In contrast, our method holds the property of optimal policy invariance and preserves a fully-informative dense reward simultaneously.

For the local optima problem generated by directly replacing the reward function, reward shaping is a popular way of incorporating domain knowledge into policy learning without changing the optimal policy.
\citet{ng1999policy} propose the potential-based reward shaping (PBRS), which defines the strict form of shaping reward function with proof of sufficiency and necessity that the optimal policy remains unchanged.
% One of the disadvantages of PBRS is that the potential function depends only upon the states, resulting in limited information provided to the agent. Therefore, 
\citet{wiewiora2003principled} extend the input of potential function to state-action pair, allowing the incorporation of behavioral knowledge.
\citet{devlin2012dynamic} introduce a time parameter to the potential function, allowing the generalization to dynamic potentials.
Further, \citet{harutyunyan2015expressing} combine the above two extensions and propose the dynamic potential-based advice (DPBA) method, which can translate arbitrary reward function into the form of potential-based shaping reward by learning a secondary potential function together with the policy. 
% Another interesting method that does not follow PBRS but also ensures the optimal policy is the Sibling Rivalry (SR) algorithm~\citep{trott2019keeping}, which samples two dueling episodes simultaneously and uses each other's terminal states as the ``anti-goals'' that should be avoided to prevent the policy from falling into local optimum. 
All these methods hold the theoretical property of optimal policy invariance. 
However, they typically calculate the shaping reward based on simple distance metrics in goal-conditioned RL, which may fail to provide sufficient information in a highly complex dynamic environment.
In contrast, we establish our shaping reward function according to the magnetic field intensity with nonlinear and anisotropic distribution, carrying more informative and conducive information about the optimization landscape.

\textcolor{black}{
For the local optima problem, apart from the reward shaping methods, Learning from Demonstration (LfD) is also considered as an effective solution that introduces a human-in-the-loop learning paradigm with expert guidance.
\citep{wu2022prioritized} proposes a human guidance-based PER mechanism to improve the efficiency and performance of RL algorithm, and an incremental learning-based behavior model that imitates human demonstration to relieve the workload of human participants.
\citep{wu2022toward} develops a real-time human-guidance-based DRL method for autonomous driving, in which an improved actor-critic architecture with modified policy and value network is introduced.
On the other hand, the local optima problem also exists when performing gradient descent to solve the ideal parameters in RL, especially in the field of control systems.
To obtain the global optimal solution, \citep{bai2019adaptive} exploits the current and some of the past gradients to update the weight vectors in neural networks, and proposes a multi-gradient recursive (MGR) RL scheme, which can eliminate the local optima problem and guarantee a faster convergence than gradient descent methods.
\citep{bai2021event} extends MGR to the distributed RL to deal with the tracking control problem of uncertain nonlinear multi-agent systems, which further decreases the dependence on network initialization.}

Another line of research that tackles the sample efficiency problem due to sparse reward in goal-conditioned RL is the hindsight relabeling strategy, which can be traced back to the famous hindsight experience replay (HER)~\citep{andrychowicz2017hindsight}.
HER makes it possible to reuse unsuccessful trajectories in the replay buffer by relabeling the ``desired goals'' with certain ``achieved goals'', significantly reducing the number of sampled transitions required for learning to complete the tasks.
Recently, researchers have been interested in designing advanced relabeling methods for goal sampling to improve the performance of HER~\citep{ding2019goal, zhang2020automatic, mccarthy2021imaginary, fang2019curriculum, pitis2020maximum, kuang2020goal}.
However, HER and its variants only focus on achieving the desired goal but fail to consider the obstacles. In contrast, our method can handle tasks with multiple dynamic obstacles and learn to reach the desired goal while avoiding the obstacles with high effectiveness and efficiency.
% Another kind of approach in reward shaping is Plan-based reward shaping, which introduces expert demonstrations of correct behavior into shaping reward to guide agent by combining RL with Learning from Demonstration (LfD). Prior work in this area has focused on representing abstract plan knowledge as potential function in PBRS using STRIPS operators~\citep{grzes2008plan, efthymiadis2016overcoming, devlin2016plan}. ~\citep{brys2015reinforcement} directly constructed expert knowledge as a Gaussian potential function, ~\citep{suay2016learning} expressed multiple expert presentations as a potential function using Inverse Reinforcement Learning. ~\citep{schubert2021plan} argued that the strict optimal policy invariance guarantee in PBRS is unnecessary in robotic manipulation applications and proposed FVRS based on Plan-based reward shaping, which significantly improve the sample efficiency.

\section{Magnetic Field-based Reward Shaping}~\label{method}
% In this section, we first introduce the overview of MFRS that uses domain knowledge based on the physical property of magnetic filed to build the shaping reward function that is able to provide the agent with sufficient position and orientation information about the dynamic goal-oriented tasks.
In this section, we first introduce the overview of MFRS that considers the target and obstacles as permanent magnets and exploits the physical property of the magnetic field to build the shaping reward function for goal-conditioned tasks.
% Then, we illustrate the calculation function of magnetic filed intensity for arbitrary given point in 3-D space, and visualize the intensity distribution around spherical and cuboid permanent magnets as examples.
% Next, we explain in detail the intensity standardization mechanism that unifies the outputs of separate calculation functions in a concurrently learning manner and define the form of intergrated magnetic reward function.
Then, we illustrate the calculation of the resulting magnetic field intensity for different shapes of permanent magnets in detail
% intensity calculation of the resulting magnetic field when we consider the goal and obstacles as permanent magnets, 
and use several normalization techniques to unify the intensity distribution and prevent the value exploding problem.
% Finally, we employ DPBA algorithm to convert the magnetic reward function into a potential-based one with the proof of satisfying the optimal policy invariance property under our setting for goal-oriented tasks with dynamic goal and obstacles.
Finally, we define the form of our magnetic reward function and transform it into a potential-based one by learning a secondary potential function concurrently to ensure the optimal policy invariance property.

\subsection{Method Overview}\label{overview}
% As the complexity of dynamic goal-oriented tasks, existing reward shaping methods set by distance lack the ability to provide agent with sufficient information of the over-changing environment due to the linear and isotropic distribution of distance, which will also bring about inefficiency to exploration.
For the sample inefficiency problem due to the sparse reward in goal-conditioned RL, reward shaping is a practical approach to incorporating domain knowledge into the learning process without changing the original optimal policy.
However, existing reward shaping methods typically use distance measures with a linear and isotropic distribution to build the shaping reward, which may fail to provide sufficient information about the complex environments with the dynamic target and obstacles.
% Most existing distance-based reward shaping methods use a linear and isotropic distribution of distance function to build the shaping reward, which may fail to provide sufficient information about the ever-changing environment due to the high complexity of dynamic goal-oriented tasks.

% Therefore, it's necessary to discard the distance setting and search for an alternative approach to build shaping reward for such complicated tasks, which is the motivation that we shift our focus to magnetic field and propose MFRS for dynamic goal-oriented tasks where we consider goal and obstacle as permanent magnets with shaping reward function build on MFI.
Therefore, it is necessary to develop a more sophisticated reward shaping method for this specific kind of task.
For a goal-conditioned RL task with the ever-changing target and obstacles, our focus is to head toward the target while keeping away from the obstacles. 
Naturally, the agent is attracted by the target and repelled by the obstacles, which is analogous to the physical phenomenon that magnets of the same polarity repel each other while those of different polarities attract each other.
Motivated by this, we model the dynamic target and obstacles as permanent magnets and establish our shaping reward according to the intensity of the magnetic field generated by these permanent magnets.
The magnetic field with a nonlinear and anisotropic distribution can provide more accessible and conducive information about the optimization landscape, thus introducing a more sophisticated shaping reward for goal-conditioned RL tasks with dynamic target and obstacles.
% The non-linearity and anisotropy feature of MFI distribution in our magnetic field-based setting is visualized in Fig.~\ref{fig:distr-camp}. 
Fig.~\ref{fig:distr-camp} visualizes an example comparison of the shaping reward distribution between the traditional distance-based setting and our magnetic field-based setting.

\begin{figure}[tb]
\centering
\subfigure[Distance-based setting]{\includegraphics[width=0.20\textwidth]{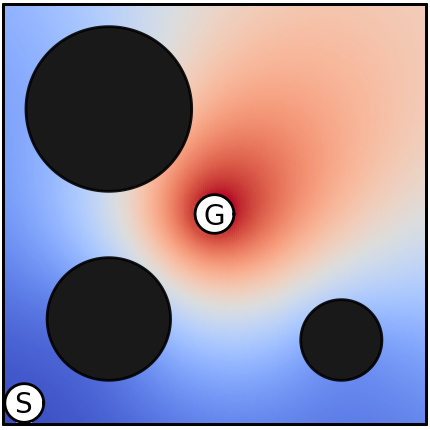}}\hspace{2em}
\subfigure[Magnetic field-based setting]{\includegraphics[width=0.20\textwidth]{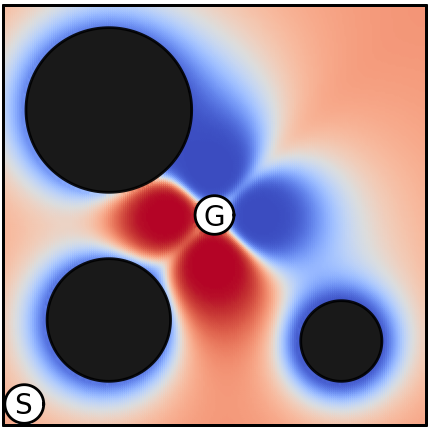}}
% \caption{The difference of reward distribution between distance-based setting and magnet field-based setting. The color denotes the value of reward in this area, the warmer the color, the larger the reward value.}
\caption{An example comparison of the shaping reward distribution between the traditional distance-based setting and our magnetic field-based setting. 
$S$ and $G$ are the start point and goal. Black circles indicate the obstacles in different sizes.
% Areas painted with warmer color possess a larger shaping reward value.
\textcolor{black}{Since the shaping reward is positively related to the magnetic field intensity, areas painted with warmer color possess a larger shaping reward.}
}
\label{fig:distr-camp}
\end{figure}

% According to the property of magnets, agent will receive higher MFI values when getting closer to the goal (and lower for the obstacles). 
According to the physical property of permanent magnets, the agent will sense a higher intensity value in the magnetic field when getting closer to the target or away from the obstacles.
% It can be observed that the area closer to the goal has larger reward value under both settings. 
% The distinction is that, compared to distance-based distribution, the reward value is intensively increasing and considerably large in the near region of the goal towards the agent under magnetic field-based setting. 
The shaping reward varies more intensively in the near region of the target and obstacles.
% In addition, the area surrounding obstacles will also receive a relatively smaller value to divide a "danger zone" for agent to stay away from obstacles. 
The target area uses a far larger shaping reward to facilitate its attraction to the agent, while the area around obstacles will receive a far smaller shaping reward as a ``danger zone'' to keep the agent at a distance.
The nonlinearity of the magnetic field reinforces the process that the agent is attracted by the target and repelled by the obstacles.
% What makes it different from distance setting is that the derivative of MFI with respect to agent's displacement will also increase while approaching the goal (and decrease for the obstacles),  which enables the variation of reward to vary intensely when agent explores the environment, especially in the near region of goal and obstacle.
Being different from the distance-based setting, the derivative of the magnetic field intensity value with respect to the agent's position will also increase when approaching the target or moving away from the obstacles.
This property makes the shaping reward vary more intensively when the agent explores the environments, especially in the near regions around the target and obstacles, as illustrated in Fig.~\ref{fig:distr-camp}-(b).
% Under this setting, agent can be informed with an extra variation trend of shaping reward function together with the value itself, and learns "how" to approach the goal.
Under this setting, the agent can be informed of an extra variation trend of the reward function together with the reward value itself. Thus it knows ``how'' to approach the target in addition to ``where'' to go.
% Therefore, the non-linearity characteristic of magnet can provide agent with an additional information to improve sample efficiency.
Therefore, the nonlinear characteristic of the magnetic field can provide more useful position information for the algorithm to improve its performance.

% On the other hand, the anisotropy distribution of MFI enables the shaping reward function of each point in space to be well-defined. 
% To be specific, in goal-oriented tasks, the optimal trajectory of agent should be the path from the initial state to goal state if there exists no obstacle in the middle based on human domain knowledge.
For a goal-conditioned RL task, the optimal policy should be the steepest path from the initial state to the target while avoiding the obstacles.
% Consider points with same distance to the goal, it's intuitive and logical to set higher potential value to those points towards agent with shorter distance and lower to the others, which can inform the orientation of goal and obstacles more explicitly to agent. 
% Under distance setting, however, these points are defined to have a common potential value, which may aggravate the problem of sampling inefficiency and more likely cause local optima of agent's policy.
Considering the points with the same Euclidean distance to the target, the distance-based reward shaping methods assign the same shaping reward (or potential) value to these points according to the isotropic distribution of the distance function.
However, the points between the initial state and the target should be preferred than the others since they can eventually lead to a shorter path to the target.
It is intuitive and rational to assign higher values to those points, which can more explicitly embed the orientation information of the target into the shaping reward.
% The above issue can be eliminated in magnetic field-based setting by simply adjusting the magnetization direction of goal towards the position of agent, which enables higher MFI values to the side facing the agent. 
Our magnetic field-based approach can realize this consideration by adjusting the target magnet's magnetization direction towards the agent's initial state.
% The region back to the agent is exactly the opposite, which potentially informs agent the "right" way to reach the goal that doesn't require too much sampling frequency. 
As shown in Fig.~\ref{fig:distr-camp}-(b), the shaping reward distribution of the target area is anisotropic, where the part between the initial state and the goal exhibits a larger value than the other parts.
% In brief, under magnetic field-based setting, agent will have an explicit insight of the direction towards the goal and also be informed of sufficient information by the intensively varying reward value, especially in the near region around the goal.
Accordingly, the agent will have a clear insight of the heading direction towards the target.
% Therefore, the anisotropy feature of magnet provides agent a distinct direction information that distance-based setting cannot offer.
In brief, the anisotropic feature of the magnetic field can provide more orientation information about the goal-conditioned RL tasks.

\textcolor{black}{Based on the above insights, we propose a magnetic field-based reward shaping (MFRS) method for goal-conditioned RL tasks, which can provide abundant and conducive information on the position and orientation of the target and obstacle due to the nonlinear and anisotropic distribution of the magnetic field.
To this end, an additional dense shaping reward according to the magnetic field intensity is added to the original sparse reward with the guarantee of optimal policy invariance.
% As a result, the agent can enjoy an informative reward signal indicating where and how far the target and obstacles are in each timestep, and maintain the same learned policy as the sparse reward condition.
As a result, the agent can enjoy an informative reward signal indicating where and how far the target and obstacles are in each timestep, and maintain the same optimal policy under the sparse reward condition.
% Therefore, the sample inefficiency problem caused by the sparse reward can be solved.
Therefore, the sample inefficiency problem caused by the sparse reward can be alleviated.}
% Base on this, we develop MFRS method in mainly four steps. 
% Based on the above insights, we propose a magnetic field-based reward shaping (MFRS) method for goal-conditioned RL tasks.
% First, we need to identify the characteristics of goal and obstacles based on domain knowledge of specific task, and find solutions to calculate MFI distribution of goal and obstacles. 
We consider the target and obstacles as permanent magnets, and calculate the intensity values of the resulting magnetic field using physical laws.
% Second, as the scales of MFI can differ from each other using separate ways of calculation for goal and obstacles, we introduce intensity standardization mechanism to balance different MFI distribution of magnets in a concurrently learning manner.
% Third, we combined the standardized MFI values of goal and obstacles in step 2, and using Softsign function to normalize the combined MFI value to prevent the exploding intensity problem occurred in the very near region of the goal.
% In addition, we use several normalization techniques to unify the distribution of magnetic field intensity and prevent the value exploding problem.
% At last, we apply DPBA algorithm to convert our magnetic field-based shaping reward into potential function with the purpose of ensuring optimal policy invariance property.
% Then, we transform our magnetic reward into the specific form of potential-based reward shaping by learning a secondary potential function concurrently, thus our MFRS method satisfies the optimal policy invariance property.

% \subsection{Calculation of MFI values}
\subsection{Calculation of Magnetic Field Intensity}\label{cal-mfi}
% In this section, we will take spherical and rectangular magnets as examples to discuss the calculation method of MFI distribution in 3-dimensional space. 
% We take the spherical and cuboid permanent magnets as examples to illustrate the calculation of magnetic field distribution in 3-D space. 
\textcolor{black}{
For permanent magnets in regular shapes, we can directly calculate the intensity function analytically, using different formulas for different shapes.
We take the spherical and cuboid permanent magnets as examples to calculate the distribution of magnetic field intensity in three-dimensional (3-D) space for illustration.
As for the permanent magnets in irregular shapes, the intensity distribution of magnetic field can be obtained by analog platforms using physics simulation software such as COMSOL~\citep{pryor2009multiphysics}.
}
% The magnetic field coordinate systems of spherical and rectangular magnets are defined as shown in Fig.~\ref{fig:mag-coord}, where the origin of coordinate system is at the center of sphere and at the vertex of cuboid, respectively. 
% Here, we assume that the magnetization direction for sphere and cuboid are both in the positive Z-axis direction, the magnetization intensity is M, and ϵ is a small value.

\begin{figure}[tb]
\centering
\subfigure[Spherical magnet]{\includegraphics[width=0.24\textwidth]{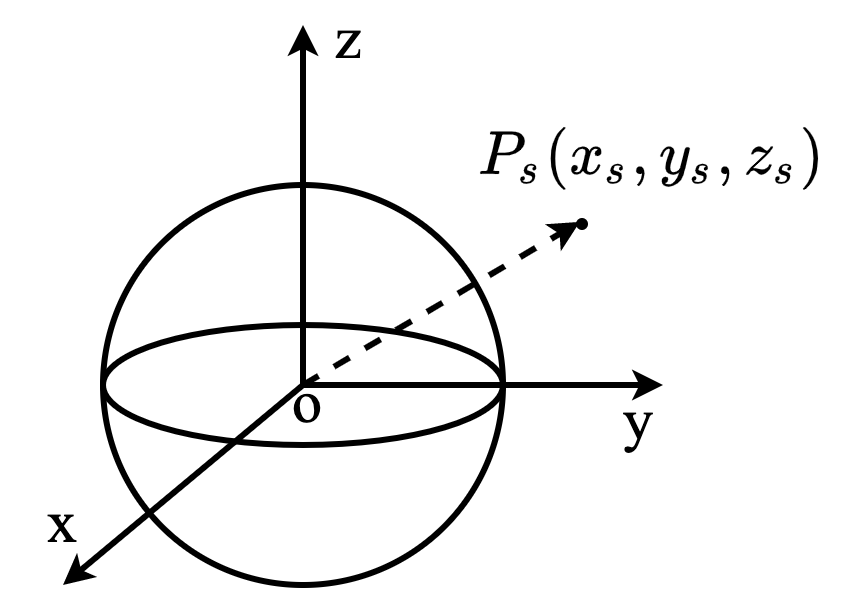}}
\subfigure[Cuboid magnet]{\includegraphics[width=0.24\textwidth]{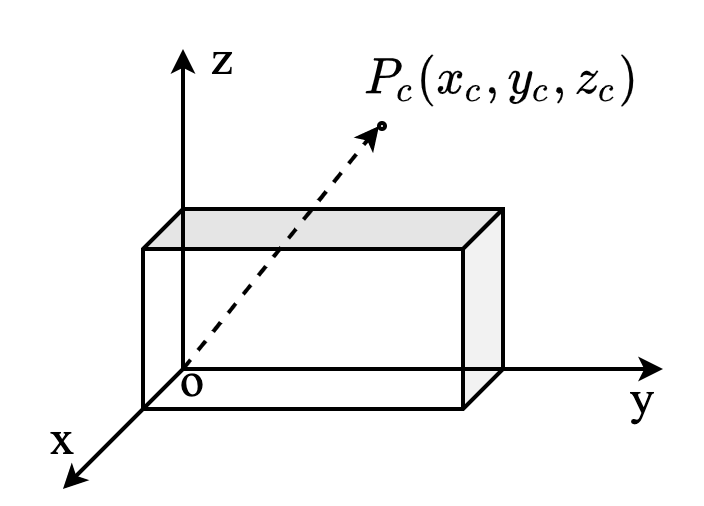}}
% \caption{Magnetic field coordinate system of Sphere Magnet and Cuboid Magnet}
\caption{Magnetic field coordinate systems of the spherical and cuboid magnets, in which $P_{s}(x_s,y_s,z_s)$ and $P_{c}(x_c,y_c,z_c)$ are arbitrary given points.}
\label{fig:mag-coord}
\end{figure}

% Fig.~\ref{fig:mag-coord} presents the magnetic field coordinate systems we define for the spherical and cuboid magnets that are both saturatedly magnetized along the positive direction of $Z$-axis.
We assume the spherical and cuboid magnets are both saturatedly magnetized along the positive direction of the $z$-axis, and Fig.~\ref{fig:mag-coord} presents the magnetic field coordinate systems of the spherical and cuboid magnets, respectively, for intensity calculation.
% Fig.~\ref{fig:mag-coord} presents the magnetic field coordinate systems of the spherical and cuboid magnets for  that are both saturatedly magnetized along the positive direction of $z$-axis.
To obtain the magnetic field intensity at an arbitrary given point in the field of spherical and cuboid magnets, we first need to transfer the point from the environment coordinate system to the magnetic field coordinate system of the spherical and cuboid magnets.
% Previous work~\cite{deakin19983} has introduced the form of three-dimensional (3-D) conformal coordinate transformation that combines axes rotation, scale change and origin shifts. 
~\citet{deakin19983} has introduced the form of 3-D conformal coordinate transformation that combines axes rotation, scale change, and origin shifts. 
% In our setting, the coordinate system transformation of points in space is an Euclidean transformation that only consists of rotation and translation, which can be defined as follows.
Analogously, our coordinate system transformation of points in space is a Euclidean transformation that only consists of rotation and translation, which can be defined as
\begin{equation}
    \begin{bmatrix}
        p^{m}_{x} \\ p^{m}_{y} \\ p^{m}_{z}
    \end{bmatrix}
    = R_{\theta_x} R_{\theta_y} R_{\theta_z} 
    \begin{bmatrix}
        p^{e}_{x} \\ p^{e}_{y} \\ p^{e}_{z}
    \end{bmatrix} + 
    \begin{bmatrix}
        T_{x} \\ T_{y} \\ T_{z}
    \end{bmatrix},
\label{pos-trans}
\end{equation}
where $p^{m}_{x}, p^{m}_{y}, p^{m}_{z}$ and $p^{e}_{x}, p^{e}_{y}, p^{e}_{z}$ are 3-D coordinates of the point in the magnetic field coordinate system and environment coordinate system, respectively. 
% The positive sense of rotation angle is determined by the right-hand screw rule. And $T_{x}, T_{y}, T_{z}$ are translations between the origins of the two coordinate systems along $X, Y, Z$ axis.
$T_{x}$, $T_{y}$, and $T_{z}$ are translations between the origins of the two coordinate systems along the $x$-axis, $y$-axis, and $z$-axis. 
% $R_{\theta_x}, R_{\theta_y}, R_{\theta_z}$ are the rotation matrices generated by the rotation angle of magnetic field coordinate system with respect to environment coordinate system around $X, Y, Z$ axis, which can be expressed by Eq.~\ref{rot-mat} below. 
$R_{\theta_x}$, $R_{\theta_y}$, and $R_{\theta_z}$ are the rotation matrices generated by the rotation angle of the magnetic field coordinate system with respect to the environment coordinate system around the $x$-axis, $y$-axis, and $z$-axis, which can be expressed by 
\begin{equation}
    \left\{\begin{array}{lr}
    R_{\theta_{x}} = \begin{bmatrix}
    1 & 0 & 0 \\
    0 & \cos\theta_{x} & -\sin\theta_{x} \\
    0 & \sin\theta_{x} & \cos\theta_{x} \end{bmatrix}
    \\
    R_{\theta_{y}} = \begin{bmatrix}
    \cos\theta_{y} & 0 & \sin\theta_{y} \\
    0 & 1 & 0 \\
    -\sin\theta_{y} & 0 & \cos\theta_{y} \end{bmatrix}
    \\
    R_{\theta_{z}} = \begin{bmatrix}
    \cos\theta_{z} & -\sin\theta_{z} & 0 \\
    \sin\theta_{z} & \cos\theta_{z} & 0 \\
    0 & 0 & 1 \end{bmatrix},
    \end{array}\right.
\label{rot-mat}
\end{equation}
where the positive sense of rotation angle is determined by the right-hand screw rule.

Now we have implemented the 3-D conformal coordinate transformation. Then we will illustrate how to calculate the magnetic field intensity of the points in the magnetic field coordinate system. For the spherical permanent magnet, given an arbitrary point $(x_s, y_s, z_s)$ in its magnetic field coordinate system, we first need to convert the point from Cartesian coordinates to Spherical coordinates by the formulae we defined below:
\begin{equation}
    \left\{\begin{array}{lr}
    r_s = \sqrt{x_s^2 + y_s^2 + z_s^2} \\
    \theta_s = \arccos \frac{z_s}{r_s + \epsilon} \\
    \phi_s = \arctan \frac{y_s}{x_s + \epsilon},
    \end{array}\right.
\end{equation}
% where $r_s, \theta_s, \phi_s$ denote the radius, inclination and azimuth in spherical coordinates, respectively, and $\epsilon$ is a small value. 
where $r_s$, $\theta_s$, and $\phi_s$ denote the radius, inclination, and azimuth in Spherical coordinates, respectively, and $\epsilon$ is a small constant. 
Let $a_s$ denote the radius of the spherical magnet, then the magnetic field intensity of the spherical magnet at given point $(r_s, \theta_s, \phi_s)$ can be calculated analytically as
\begin{equation}
    \left\{\begin{array}{lr}
    \begin{aligned}
    H_{x}^{s} = &\frac{M_s}{4\pi} \int_{0}^{\pi} \int_{0}^{2\pi}
    [a_s^3\sin^3\theta_0\cos\varphi_0(r_s\cos\theta_s - a_s\cos\theta_0)] \\ 
    &/ [(r_s^2 + a_s^2 - 2a_s r_s(\cos\theta_s\cos\theta_0 + \\ &\sin\theta_s\sin\theta_0\cos(\varphi_s - \varphi_0)))^{3/2} + \epsilon] d\varphi_0 d\theta_0
    \end{aligned}
    \\
    \begin{aligned}
    H_{y}^{s} = &\frac{M_s}{4\pi} \int_{0}^{\pi} \int_{0}^{2\pi}
    [a_s^3\sin^3\theta_0\sin\varphi_0(r_s\cos\theta_s - a_s\cos\theta_0)] \\ 
    &/ [(r_s^2 + a_s^2 - 2a_s r_s(\cos\theta_s\cos\theta_0 + \\ &\sin\theta_s\sin\theta_0\cos(\varphi_s - \varphi_0)))^{3/2} + \epsilon] d\varphi_0 d\theta_0
    \end{aligned}
    \\
    \begin{aligned}
    H_{z}^{s} = &\frac{M_s}{4\pi} \int_{0}^{\pi} \int_{0}^{2\pi}
    [a_s^3\sin^3\theta_0(a_s\sin\theta_0 - r_s\cos\theta_s \\ 
    &\cos(\varphi_s - \varphi_0))] / [(r_s^2 + a_s^2 - 2a_s r_s(\cos\theta_s\cos\theta_0 + \\ &\sin\theta_s\sin\theta_0\cos(\varphi_s - \varphi_0)))^{3/2} + \epsilon] d\varphi_0 d\theta_0
    \end{aligned}
    \\
    H_{s} = \sqrt{(H_{x}^{s})^2 + (H_{y}^{s})^2 + (H_{z}^{s})^2},
    \end{array}\right.
\label{mag-sphere}
\end{equation}
where $M_s$ is the magnetization intensity of the spherical magnet. 
% $H_x^s, H_y^s, H_z^s$ are the intensity of magnetic field along the $X, Y, Z$ axes of spherical magnet, and $H_s$ is the value of magnetic field intensity for given point $(r_s, \theta_s, \phi_s)$ in the field of spherical magnet.
$H_x^s$, $H_y^s$, and $H_z^s$ are the calculated intensities of the magnetic field along the $x$-axis, $y$-axis, and $z$-axis for a spherical magnet, and $H_s$ is the required value of magnetic field intensity for the given point $(x_s, y_s, z_s)$ in the field of the spherical magnet.

For the cuboid magnet, let $l_c, w_c, h_c$ denote the length, width, and height of the cuboid magnet along the $x$-, $y$-, and $z$-axis, respectively. Then the magnetic field intensity at a given point $(x_c, y_c, z_c)$ can be calculated as
\begin{equation}
\label{mag-cuboid}
    \left\{\begin{array}{lr}
    \begin{aligned}
    H_{x}^{c} = &-\frac{M_c}{8\pi}[\Gamma(l_c - x_c, w_c - y_c, z_c)+\Gamma(l_c - x_c, y_c, z_c) \\
       &-\Gamma(x_c, w_c - y_c, z_c)-\Gamma(x_c, y_c, z_c)]
    \end{aligned}
    \\
    \begin{aligned}
    H_{y}^{c} = &-\frac{M_c}{8\pi}[\Gamma(w_c - y, l_c - x_c, z_c)+\Gamma(w_c - y_c, x_c, z_c) \\            &-\Gamma(y_c, l_c - x_c, z_c)-\Gamma(y_c, x_c, z_c)]
    \end{aligned}
    \\
    \begin{aligned}
    H_{z}^{c} = &-\frac{M_c}{8\pi}[\Psi(w_c - y_c, l_c - x_c, z_c) + \Psi(y_c, l_c - x_c, z_c) \\                &+\Psi(l_c - x_c, w_c - y_c, z_c) + \Psi(x_c, w_c - y_c, z_c) \\ 
                  &+\Psi(w_c - y_c, x_c, z_c) + \Psi(y_c, x_c, z_c) \\ &+\Psi(l_c - x_c, y_c, z_c)+\Psi(x_c, y_c, z_c)]
    \end{aligned}
    \\
    H_{c} = \sqrt{(H_{x}^{c})^2+(H_{y}^{c})^2+(H_{z}^{c})^2},
    \end{array}\right.
\end{equation}
% where $\Gamma(\gamma_{1},\gamma_{2},\gamma_{3})$ and $\Psi(\varphi_{1},\varphi_{2},\varphi_{3})$ are two auxiliary functions to simplify calculations, which are defined by Eq.~\ref{mag-cub} below. $M_c$ denotes the magnetization of cuboid magnet. $H_x^c, H_y^c, H_z^c$ are the intensity of magnetic field along the $X, Y, Z$ axes of cuboid magnet, and $H_c$ is the value of magnetic field intensity for given point $(x_c, y_c, z_c)$ in the field of cuboid magnet.
where $M_c$ denotes the magnetization intensities of the cuboid magnet. $H_x^c$, $H_y^c$, and $H_z^c$ are the calculated intensities of the magnetic field along the $x$-axis, $y$-axis, and $z$-axis for a cuboid magnet, and $H_c$ is the required value of the magnetic field intensity for the given point $(x_c, y_c, z_c)$ in the field of the cuboid magnet.
$\Gamma(\gamma_{1},\gamma_{2},\gamma_{3})$ and $\Psi(\varphi_{1},\varphi_{2},\varphi_{3})$ are two auxiliary functions to simplify calculations as defined by
\begin{equation}
    \left\{\begin{array}{lr}
    \Gamma(\gamma_{1},\gamma_{2},\gamma_{3}) \!\! = \!\! \left.\ln\!\left(\frac{\sqrt{(\gamma_{1})^2+(\gamma_{2})^2+(\gamma_{3}-z_{0})^2}-\gamma_{2}}{{\sqrt{(\gamma_{1})^2+(\gamma_{2})^2+(\gamma_{3}-z_{0})^2}+\gamma_{2}}+\epsilon} +\epsilon\right)\!\right|_{z_{0}=0}^{z_{0}=h_c} \\
    
    \Psi(\varphi_{1},\varphi_{2},\varphi_{3}) \!\! = \!\! \left.\arctan\frac{\varphi_{1}(\varphi_{3}-z_{0})}{\varphi_{2}\sqrt{(\varphi_{1})^2+(\varphi_{2})^2+(\varphi_{3}-z_{0})^2}+\epsilon}\right|_{z_{0}=0}^{z_{0}=h_c}.
    \end{array}\right.
\label{mag-cub}
\end{equation}
% where $\Gamma$ is the function with $\gamma_1, \gamma_2, \gamma_3$ as the inputs, $\Psi$ is the function with $\varphi_1, \varphi_2, \varphi_3$ as the inputs, and $\epsilon$ is a small value.

With these analytical expressions, we are now able to obtain the specific value of magnetic field intensity at any given point in the field of spherical magnet and cuboid magnet.
% To simplify calculation, we let $M_s = M_c = 4\pi$ since the value of magnetization only linearly determines the magnitude of magnetic field intensity, but not the distribution.
For simplicity, we let $M_s = M_c = 4\pi$ as the value of magnetization intensity only determines the magnitude of magnetic field intensity, rather than its distribution.
Fig.~\ref{fig:mag-distr} visualizes the distributions of magnetic field intensity generated by the spherical and cuboid magnets in 3-D space, where the sphere's radius is $0.02$, the length, width, and height of the cuboid are $0.1$, $0.4$, $0.05$, respectively.
Areas with warmer colors have a higher intensity value in the magnetic field.
It can be observed that the surfaces of the sphere and cuboid magnets exhibit the highest intensity value, and the intensity decreases intensively as the distance to the magnet increases.

\textcolor{black}{
Towards the environments with higher dimensions (e.g., more than three dimensions), though we cannot directly calculate the magnetic field intensity since the target/obstacles are not physical entities and thus can not be considered as permanent magnets in the physical sense, the concept and properties of the magnetic field can still be extended to the scenario with a high-dimensional goal in a mathematical sense.
For instance, the magnetic field intensity of a spherical permanent magnet along the axis of magnetization (taking the $z$-axis as an example) can be calculated as below:
\begin{equation}
    H = \left\{\begin{array}{lr}
    \frac{2M_{0}r^{3}}{3{|z|^{3}}}\hat{z}, \qquad (|z|\geq r) \\
    -\frac{M_{0}}{3}\hat{z}, \qquad (|z|\textless r),
    \end{array}\right.
\end{equation}
where $M_{0}$ is the magnetization intensity, $r$ is the radius of the spherical magnet, and $z$ denotes the coordinate on the $z$-axis in the magnetic field coordinate system.
As we can see, the intensity value is inversely proportional to the third power of the distance to the origin of magnetic field coordinate system along the magnetization axis outside the magnet.
This property of the spherical magnets can be extended to high-dimensional goals by simply replacing $|z|$ with a distance metric (e.g., L2-norm) between the current state and the target state, which can still maintain a nonlinear and intensively-varying distribution of shaping rewards that provides sufficient information about the optimization landscape.}

\begin{figure}[tb]
\centering
\subfigure[Spherical magnet]{\includegraphics[width=0.24\textwidth]{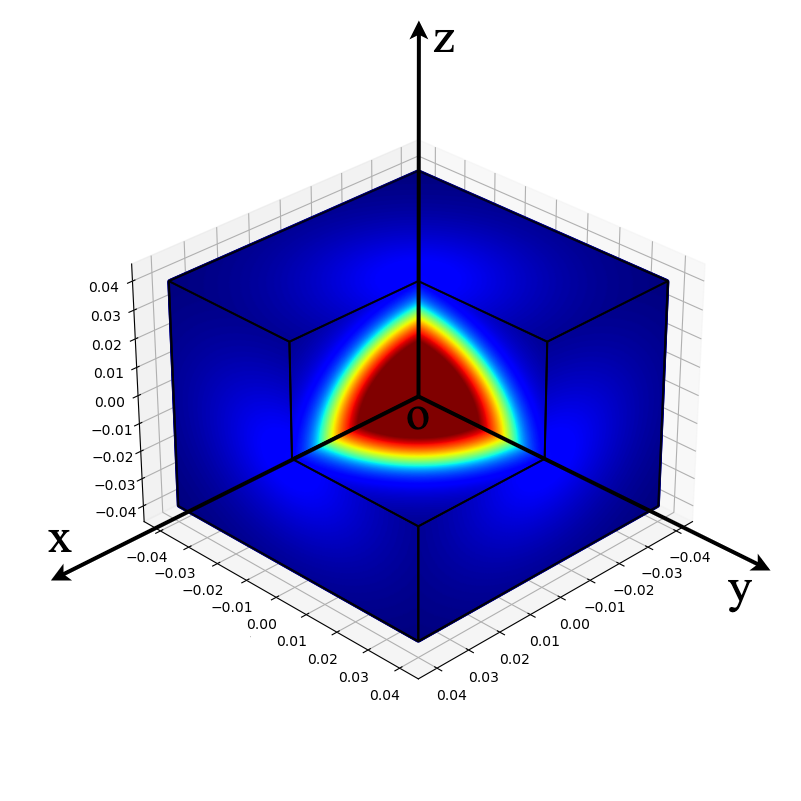}}
\subfigure[Cuboid magnet]{\includegraphics[width=0.24\textwidth]{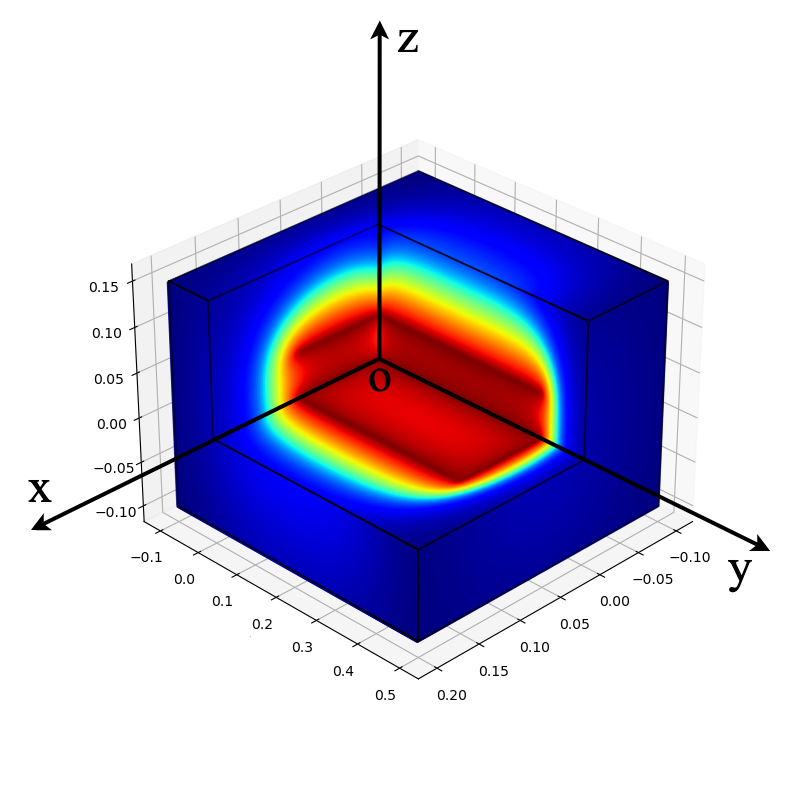}}
\caption{Distributions of the magnetic field intensity in 3-D space generated by the spherical and cuboid magnets. Areas painted with warmer color possess a larger intensity value of the magnetic field.}
\label{fig:mag-distr}
\end{figure}

\subsection{Magnetic Reward Function}\label{mag-srf}
% Consider an goal-oriented task with 1 goal and $N$ obstacles, $\mathcal{M}_{t}(p_a^t, \epsilon)$ and $\mathcal{M}_{o_i}(p_a^{o_i}, \epsilon) (i=1,2,...,N)$ denote the calculation function of magnetic filed intensity for the goal and $N$ obstacles, where $p_a^t, p_a^{o_i}$ denote the agent's position in the magnetic field coordinate system of the goal and $N$ obstacles, $\epsilon$ is a small value, and the outputs are $H_{t}$ and $H_{o_i} (i=1,2,...,N)$, respectively.
Considering a goal-conditioned task with 1 target and $N$ obstacles. 
\textcolor{black}{
Let $\mathcal{M}_{T}(P_a^{T})$ and $\mathcal{M}_{O_i}(P_a^{O_i}) \ (i=1,2,...,N)$ denote the functions that calculate the intensity values $H_{T}$ and $H_{O_i}$ of the magnetic fields generated by the target magnet and obstacle magnets, respectively, where $P_a^T$ and $P_a^{O_i}$ are the agent's positions in the magnetic field coordinate system of target and obstacle magnets. 
Taking the spherical target and cuboid obstacle as an example, the intensity functions $\mathcal{M}_{T}$ and $\mathcal{M}_{O}$ can be defined as Eq.~\ref{mag-sphere} and Eq.~\ref{mag-cuboid} in Section~\ref{cal-mfi}, respectively.}
% Let $\mathcal{M}_{T}(P_a^{T})$ and $\mathcal{M}_{O_i}(P_a^{O_i}) \ (i=1,2,...,N)$ denote the functions that calculate the intensity values $H_{T}$ and $H_{O_i}$ of the magnetic fields generated by the target and obstacles, respectively, where $P_a^T$ and $P_a^{O_i}$ are the agent's positions in the magnetic field coordinate system of the target and obstacles.
% However, they cannot directly be combined with each other since the scales of magnetic field intensity for various magnets are often different using separate way of calculation. 
The intensity values cannot be directly combined since the scales of magnetic field intensity for various magnets are different using separate ways of calculation. 
% It may amplify the effect of one or some magnets and give an incorrect reward signal to the agent if the intensity scales of these magnets are relatively greater than others. 
It may amplify the effect of one or some magnets and give an incorrect reward signal to the agent if the intensity scales of these magnets are much larger than others. 
% Therefore, the outputs of intensity calculation functions, $H_{t}$ and $H_{o_i}$, have to be standardized in some way. 
Therefore, we need to standardize the calculated intensity values $H_{T}$ and $H_{O_i}$.

% Here, we propose intensity standardization mechanism to unify the magnetic filed intensity to a common distribution for goal and obstacles. 
Intuitively, we use the $z$-score method to unify the unevenly distributed intensity values into a standard Gaussian distribution for the target and obstacles magnets as
% Intuitively, we use Z-score method to turn the unevenly distributed intensity values into standard Gaussian distribution for goal and obstacles, which can be expressed as below.
% Intuitively, we use $z$-score method to turn the unevenly distributed intensity values into a standard Gaussian distribution for the goal and obstacles as
\begin{equation}
\left\{\begin{array}{lr}
    H_{T}^{\mathcal{N}} = \frac{H_{T}-\mu_{T}}{\sigma_{T} + \epsilon} \\
    H_{O_i}^{\mathcal{N}} = \frac{H_{O_i}-\mu_{O_i}}{\sigma_{O_i} + \epsilon} \quad (i=1,2,...,N),
\end{array}\right.
\label{std-h}
\end{equation}
% where $\epsilon$ is a small value, $\mu_{t}$ and $\mu_{o_i}$ denote the mean of magnetic filed intensity for goal and obstacles, $\sigma_{t}$ and $\sigma_{o_i}$ denote the standard deviation of magnetic filed intensity for goal and obstacles, respectively. 
where $\epsilon$ is a small constant, $\mu_{T}$ and $\mu_{O_i}$ denote the means of magnetic field intensity for the target and obstacles magnet, and $\sigma_{T}$ and $\sigma_{O_i}$ denote the standard deviation. 
% Unfortunately, the true value of these values cannot be obtained directly since the agent knows nothing about the specific intensity distribution of goal and obstacles at the beginning of learning process, but they can be estimated as long as the agent collects a large number of magnetic field intensity for goal and obstacles. 
% Therefore, we introduce an extra magnet buffer $\mathcal{D}_{M}$ o store $H_{t}$ and $H_{o_i} (i=1,2,...,N)$ sampled together with transitions at the same time. 
Unfortunately, the actual values of the mean and standard deviation cannot be obtained directly since the agent knows nothing about the specific intensity distribution at the beginning of training.
Hence, we turn to estimate these values by introducing an extra magnet buffer $\mathcal{D}_\mathcal{M}$ that stores the calculated intensity values $H_{T}$ and $H_{O_i}$.
% Every time the agent finishes an episode, we concurrently update the values of $\mu$ and $\sigma$ by the mean and standard deviation of $H_{t}$ and $H_{o_i}$ in current $\mathcal{D}_{M}$, which will be used to standardize $H_{t}, H_{o_i} (i=1,2,...,N)$ sampled in the next episode using Eq~\ref{std-h}. 
During the learning process, we concurrently update the values of the mean and standard deviation for $H_{T}$ and $H_{O_i}$ using the up-to-date buffer $\mathcal{D}_\mathcal{M}$, which are used to standardize the values of calculated magnetic field intensity in the next learning episode.
% According to the law of large numbers, as the quantity of collected intensity values in $\mathcal{D}_{M}$ tends to infinity, the estimated values of mean and standard deviation obtained by sampling will be equal to their true value.
According to the law of large numbers, as the number of collected data in $\mathcal{D}_\mathcal{M}$ increases, the estimated values of the mean and standard deviation will converge to their actual values.

% After the implement of intensity standardization, the unified magnetic field intensity of goal and obstacles can be combined to form a single value representing the overall standardized magnetic field intensity of any given point in the space. 
After implementing the standardization of the magnet field intensity, the unified intensity of the target and obstacles magnets can be combined to a value that represents the overall intensity of the magnetic field generated by the target and obstacles together, which can be expressed as
% We define the expression of the combined intensity value as below.
\begin{equation}
    H_{com} = H_{T}^{\mathcal{N}} - \frac{1}{N}\sum_{i=1}^{N} H_{O_i}^{\mathcal{N}} \quad (i=1,2,...,N).
\label{H_com}
\end{equation}
Recall our motivation in \ref{overview}, we want the agent to be attracted by the target and repelled by the obstacles when exploring the environment.
In addition, we consider that the target's attraction has the exact extent of the effect as all obstacles' repulsion.
% so that naturally we give a positive value to $H_{t}^{\mathcal{N}}$ while a negative value to the mean of $H_{o_i}^{\mathcal{N}}$ to calculate the combined intensity value $H_{com}$.
Accordingly, we take the mean of unified intensity values among all the obstacles in the calculation and give a positive value to $H_{T}^{\mathcal{N}}$ while a negative value to the mean of $H_{O_i}^{\mathcal{N}}$ to define the combined intensity value $H_{com}$.

% However, $H_{com}$ cannot be directly regarded as the magnetic reward given to the agent since here comes another problem in magnetic field setting. 
% Due to the physical property of magnets, the magnetic field intensity are extremely large in the very near region of the goal, as shown in Fig.~\ref{fig:distr-camp}.
Due to the physical property of magnets, the intensity values will be tremendous in the near region of the target, as shown in Fig.~\ref{fig:distr-camp}.
% These values are so large that they will exceed the terminal reward given to the agent when it reaches the goal, which prevents the agent to step into the goal region and will definitely mislead the policy for the agent. 
Therefore, regarding $H_{com}$ as the magnetic reward could probably lead to a phenomenon that the value of the magnetic reward is orders of magnitude larger than the original reward.
% We name it the exploding intensity problem. 
% We call it the intensity exploding problem. 
% To control the value of the combined intensity value, we pass it through a Softsign function to strictly normalize $H_{com}$ to the range of $[-1, 1]$ and define the magnetic reward function $R^{m}$ as the output of Softsign function, which can be expressed as below.
To address the value exploding problem, we employ the Softsign function to normalize the combined intensity value within a bounded range of $[-1, 1]$ and use the output of the Softsign function as the magnetic reward $R^{m}$:
\begin{equation}
    % R^{m} = Softsign(H_{com}) = \frac{H_{com}}{1 + \mid H_{com} \mid}
    R^{m} = \text{Softsign}(H_{com}) = \frac{H_{com}}{1 + \mid H_{com} \mid}.
\label{shaping_r}
\end{equation}

%%%%%%%%%%%%%%%%%%%%%%%%%%%%%%%%%%%%%%%%%%%%%%%%%%%%%%%%%%%%%%%%%%%%%%%
\begin{algorithm}[tb]
\caption{Magnetic reward function}
\label{srf-mfrs}

\KwIn{state $s$; goal $g$; \newline
intensity calculation functions $\mathcal{M}_T, \mathcal{M}_{O_i}$; \newline
$\mu_T, \sigma_T, \mu_{O_i}, \sigma_{O_i}\ (i=1,2,...,N)$}
\KwOut{magnetic field intensity $H_T, H_{O_i}$; \newline 
magnetic reward $R^{m}$}

Obtain agent's position $P_a$ according to $s$ \\
Obtain target's position $P_T$ and obstacles' positions $P_{O_i}$ according to $g$ \\

Transfer $P_a$ to $P_a^T, P_a^{O_i}$ using~(\ref{pos-trans}) according to $P_T, P_{O_i}$\\
$H_{T}\leftarrow \mathcal{M}_{T}(P_{a}^{T}), H_{O_i}\leftarrow \mathcal{M}_{O_i}(P_{a}^{O_i})$ \\
Calculate $H_{T}^{\mathcal{N}}$ and $H_{O_i}^{\mathcal{N}}$ using~(\ref{std-h}) according to $\mu_T, \sigma_T, \mu_{O_i}, \sigma_{O_i}$ \\
Calculate $H_{com}$ using~(\ref{H_com}) \\
$R^{m} \leftarrow \text{Softsign}(H_{com})$
\end{algorithm}
%%%%%%%%%%%%%%%%%%%%%%%%%%%%%%%%%%%%%%%%%%%%%%%%%%%%%%%%%%%%%%%%%%%%%%%

% In goal-oriented tasks, the goal position is known to the agent and regarded as an extra input together with the state to determine action output of the agent. 
% We introduce obstacles to the setting and consider the position of goal and obstacles as the context of given task for each episode.
% The calculation process of the magnetic reward function in our method is shown in Algorithm~\ref{srf-mfrs}. 
Algorithm~\ref{srf-mfrs} shows the calculation process of the magnetic reward function in our method.
Since we consider the obstacles as a part of the goal in goal-conditioned RL together with the target, we define the goal in our setting as $g := [P_T,P_{O_1},P_{O_2},...,P_{O_N}]$, where $P_T$ is the target's position and $P_{O_1}, P_{O_2},...,P_{O_N}$ are obstacles' positions.
% In summary, we first extract the position of agent, goal and obstacles from state $s$ and task context $c$, transfer the coordinates of the agent from the environment coordinate system to the magnetic field coordinate system of goal and obstacles, and obtain intensity values of agent in the magnetic field of goal and obstacles using separate way of calculation. 
% Specifically, we first transfer the agent's position from the environment coordinate system to the magnetic field coordinate system of the target and obstacle magnets, and obtain the intensity values in the magnetic field of the target and obstacle magnets using separate ways of calculation.
% Then we use intensity standardization to unify each intensity value to a common standard Gaussian distribution in a concurrently learning manner, combine the standardized values of magnetic field intensity according to the property of magnets and use Softsign function to normalize it with the purpose of addressing the problem of exploding intensity occurred in the near region of the goal.
% Then, we use the estimated value of mean and standard deviation to unify the calculated magnetic field intensity and combine the intensity values according to the physical property of magnets.
% Finally, we normalize the combined intensity value within a bounded range using the Softsign function and use the output of the Softsign function as the magnetic reward.

\subsection{Magnetic Field-based Reward Shaping}
% Now we have introduced the calculation process of magnetic reward function $R^{m}$ in our method for goal-oriented tasks. 
% However, directly combining magnetic reward with the native reward given from environment may probably cause the policy fall into local optima since it changes the original policy with additional magnetic reward. 
% Potential-based reward shaping (PBRS)~\cite{ng1999policy} defines the form of shaping reward that guarantees the optimal policy invariance property. 
After calculating the magnetic reward $R^m$ that represents the overall intensity generated by the target and obstacles magnets with normalization techniques, we need to guarantee the optimal policy invariance of our method when using $R^m$ as the shaping reward.
PBRS~\cite{ng1999policy} defines the shaping reward function as $F(s,a,s') = \gamma\Phi(s') - \Phi(s)$ with proof of sufficiency and necessity that the optimal policy remains unchanged, where $\Phi(\cdot)$ denotes the potential function carrying human domain knowledge, and $\gamma\in[0,1]$ is the discount factor.
In this paper, we employ the DPBA~\citep{harutyunyan2015expressing} method to transform our magnetic reward $R^{m}$ into the form of potential-based reward shaping in the goal-conditioned RL setting.
To be specific, we need to achieve $F \approx R^{m}$,
% The secondary potential function $\Phi_{t}(s_t,a_t,c)$ must be learned in an on-policy way~\cite{suay2016learning}, with a technique such as SARSA~\cite{rummery1994line}:
where the potential function in $F$ can be learned in an on-policy way~\cite{suay2016learning} with a technique analogous to SARSA~\cite{rummery1994line}:
\begin{equation}
    \Phi_{t+1}(s,a,g) \leftarrow \Phi_{t}(s,a,g) + \eta\delta_t^{\Phi},
\label{phi-update}
\end{equation}
where $\eta$ is the learning rate of this secondary goal-conditioned potential function $\Phi$, and $\delta_t^{\Phi}$ denotes the temporal difference (TD) error of the state transition:
\begin{equation}
    \delta_t^{\Phi} = r_{t}^{\Phi} + \gamma\Phi_t(s_{t+1},a_{t+1},g) - \Phi_t(s_t,a_t,g),
\end{equation}
% where $a_{t+1}$ is chosen using current policy according to $s_{t+1}$ and $c$. 
where $a_{t+1}$ is chosen using the current policy.
% The value of $R^{\Phi}$ equals to the negation of the expert-provided shaping reward function according to DPBA, which is the magnetic field-based shaping reward $R^m$ in our setting, and thus: $r_{t}^{\Phi} = - r^{m}_{t}$.
According to DPBA~\citep{harutyunyan2015expressing}, the value of $R^{\Phi}$ equals the negation of the expert-provided reward function, which is the magnetic reward in our method: $R^{\Phi} = -R^{m}$, and thus $r_{t}^{\Phi} = - r^{m}_{t}$.

% As for the tasks with continuous state and action space, we represent the potential function $\Phi$ as a parameterized approximation function $p_{\phi}(s,a,c)$ using a deep neural network (DNN), where $\phi$ is the weights of the network and $s,a,c$ are state, action, task context as inputs, respectively. 
Since we focus on the tasks with continuous state and action space, the goal-conditioned potential function can be parameterized using a deep neural network $\Phi_{\bm{\psi}}(s,a,g)$ with the weights $\bm{\psi}$.
% For the tasks with continuous state and action space, we represent the potential function $\Phi$ as a parameterized approximation function $p_{\phi}(s,a,c)$ using a deep neural network (DNN), where $\phi$ is the weights o
% the network and $(s,a,c)$ are state, action, and task context as inputs, respectively. 
% By definition, the temporal difference error $\delta_t^{\Phi}$ will tend to 0 at the time of convergence, and hence we define the loss function of potential network $p_{\phi}(s,a,c)$ as below:
Akin to the temporal-difference learning, we use the Bellman error as the loss function of this potential network as
\begin{equation}
    \mathcal{L}_{\Phi} = \frac{1}{2} \left((-r^{m}_{t} + \gamma\Phi_{\bm{\psi}}(s_{t+1},a_{t+1},g) - \Phi_{\bm{\psi}}(s_t,a_t,g)\right)^{2}.
\label{loss-func}
\end{equation}
Hereafter, the weights of $\Phi_{\bm{\psi}}(s,a,g)$ can be updated using the gradient descent as: 
\begin{equation}
    \bm{\psi}\leftarrow \bm{\psi} - \eta\nabla_{\bm{\psi}}\mathcal{L}_{\Phi}.
\label{update-psi}
\end{equation}
% where $\eta$ is the learning rate.
% As the secondary potential function $\Phi$ gets updated each step, the potential-based shaping reward $F$ at time $t$ is then of the form:
As the secondary goal-conditioned potential function $\Phi$ is updated every timestep, the potential-based shaping reward $F$ at each timestep can be expressed as
\begin{equation}
    f_{t} = \gamma\Phi_{t+1}(s_{t+1},a_{t+1},g) - \Phi_{t}(s_t,a_t,g).
\label{shaping-reward}
\end{equation}

%%%%%%%%%%%%%%%%%%%%%%%%%%%%%%%%%%%%%%%%%%%%%%%%%%%%%%%%%%%%%%%%%%%%%%%
\begin{algorithm}[tb]
\caption{MFRS for goal-conditioned RL}
\label{mfrs}

\KwIn{potential function $\Phi_{\bm{\psi}}$; replay buffer $\mathcal{D}_{\mathcal{R}}$; magnet buffer $\mathcal{D}_{\mathcal{M}}$; number of obstacles $N$; discount factor $\gamma$; off-policy RL algorithm $\mathbb{A}$} 
\KwOut{optimal goal-conditioned policy $\pi_{\bm{\theta}}^{*}$}

Initialize $\pi_{\bm{\theta}}$ arbitrarily, $\Phi_{\bm{\psi}}\leftarrow 0$ \\
Initialize replay buffer $\mathcal{D}_{\mathcal{R}}$ and magnet buffer $\mathcal{D}_{\mathcal{M}}$ \\
$\mu_T, \mu_{O_i} \leftarrow 0$ ; $\sigma_T, \sigma_{O_i} \leftarrow 1\ (i=1,2,...,N)$ \\

\For{episode = 1, E}{
    Sample an initial state $s_0$ and a goal $g$ \\
    \For{t = 0, H - 1}{
        $a_t\leftarrow \pi_{\bm{\theta}}(s_t,g)$ \\
        Execute $a_t$, observe $s_{t+1}$ and $r_t$ \\
        Calculate $H_{T}, H_{O_i}$ and $r_t^m$ using Algorithm~\ref{srf-mfrs} according to $s_{t+1}, g$ and $\mu_T, \sigma_T, \mu_{O_i}, \sigma_{O_i}$ \\
        Store $H_T, H_{O_i}$ in $\mathcal{D}_{\mathcal{M}}$ \\
        
        $a_{t+1}\leftarrow \pi_{\bm{\theta}}(s_{t+1},g)$ \\
        Update $\bm{\psi}$ using Eq.~(\ref{update-psi}) \\
        Calculate $f_t$ using Eq.~(\ref{shaping-reward}) \\
        $r_{t}'\leftarrow r_t + f_t$ \\
        Store transition $(s_t,a_t,s_{t+1},r_{t}',g)$ in $\mathcal{D}_{\mathcal{R}}$ \\
    }
    Update $\pi_{\bm{\theta}}$ using $\mathbb{A}$ \\
    $\mu_{T}\leftarrow mean(H_{T}), \sigma_{T}\leftarrow std(H_{T})$ \\
    $\mu_{O_i} \leftarrow mean(H_{O_i}), \sigma_{O_i} \leftarrow std(H_{O_i})$ \\
}
\end{algorithm}
%%%%%%%%%%%%%%%%%%%%%%%%%%%%%%%%%%%%%%%%%%%%%%%%%%%%%%%%%%%%%%%%%%%%%%%

% What’s more important is that the exploration efficiency can be significantly improved by our magnetic field-based reward shaping method without changing the origin optimal policy. 
% The exploration efficiency can be significantly improved by our magnetic field-based reward shaping method without changing the origin optimal policy. 
% Algorithm~\ref{mfrs} describes the whole calculation process of our proposed MFRS method. 
Algorithm~\ref{mfrs} presents the integrated process of MFRS for goal-conditioned RL.
% Since we use an additional buffer $\mathcal{D}_{M}$ to store long-term values of magnetic field intensity for standardization in Algorithm~\ref{srf-mfrs}, our method is therefore more suitable for off-policy RL algorithms, such as DDPG and TD3.
% Specifically, the intensity values $H_g, H_{o_i}$ of magnet field generated by the target and obstacles are calculated for each state and stored in the magnet buffer $\mathcal{D}_\mathcal{M}$, and the mean and standard deviation of $H_g, H_{o_i}$ for standardization in Algorithm~\ref{srf-mfrs} are updated using the up-to-date $\mathcal{D}_\mathcal{M}$ whenever an episode terminates.
% For each timestep, the agent receives original reward $R$ from the environment and uses current values of mean and standard deviation to calculate magnetic reward $R^m$, which will be then transformed into potential-based shaping reward $F$ by learning a secondary potential function $\Phi$ concurrently.
% For each episode, the agent uses an off-policy RL algorithm, such as DDPG~\citep{lillicrap2015continuous} and TD3~\citep{fujimoto2018addressing}, to update the goal-conditioned policy according the transitions sampled in replay buffer $\mathcal{D}_\mathcal{R}$ with the shaped reward $R+F$. This process continues until we obtain the optimal goal-conditioned policy $\pi_{\bm{\theta}}^{*}$.
Next, we give a theoretical guarantee of optimal policy invariance in the goal-conditioned RL setting of our method in the Appendix, and a convergence analysis that the expectation of shaping reward $F$ will be equal to our magnetic reward $R^{m}$ when the goal-conditioned potential function $\Phi$ has converged in Theorem 1 below.
% The following theorem provides the guarantee of optimal policy invariance in goal-conditioned RL setting of our method:

\begin{theorem}
    Let $\Phi$ be the goal-conditioned potential function updated by Eq.~(\ref{phi-update}) with the state transition matrix $\mathcal{T}$, where $R^{\Phi}$ equals the negation of our magnetic reward $R^{m}$. Then, the expectation of shaping reward $F$ expressed in Eq.~(\ref{shaping-reward}) will be equal to $R^{m}$ when $\Phi$ has converged.
\end{theorem}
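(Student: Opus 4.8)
The plan is to show that when the secondary potential function $\Phi$ has converged under the update rule in Eq.~(\ref{phi-update}), the expected shaping reward $\mathbb{E}[F]$ reduces exactly to the magnetic reward $R^{m}$. First I would write down the convergence condition: at a fixed point of the SARSA-like update, the expected TD error vanishes, i.e.\ $\mathbb{E}[\delta_t^{\Phi}] = 0$. Substituting $r_t^{\Phi} = -r_t^{m}$ into the TD error gives the stationarity relation
\begin{equation}
    \mathbb{E}\!\left[-r_t^{m} + \gamma\Phi(s_{t+1},a_{t+1},g) - \Phi(s_t,a_t,g)\right] = 0,
\label{eq:fixed-point}
\end{equation}
where the expectation is taken over the state transition matrix $\mathcal{T}$ and the action chosen by the current policy. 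The key observation is that this relation rearranges directly into a statement about the quantity $\gamma\Phi(s_{t+1},a_{t+1},g) - \Phi(s_t,a_t,g)$, which is precisely the definition of the potential-based shaping reward $f_t$ in Eq.~(\ref{shaping-reward}) once $\Phi$ is no longer changing between timesteps (so that $\Phi_{t+1} = \Phi_t = \Phi$).

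The main steps, in order, are: (i) state the definition of convergence for the potential function as the fixed point where the expected update increment is zero; (ii) expand $\mathbb{E}[\delta_t^{\Phi}]$ using the linearity of expectation and the definition $R^{\Phi}=-R^{m}$; (iii) identify that at convergence $\Phi_{t+1}$ and $\Phi_t$ coincide, so that the expression $\gamma\Phi(s_{t+1},a_{t+1},g) - \Phi(s_t,a_t,g)$ appearing in Eq.~(\ref{eq:fixed-point}) is literally $\mathbb{E}[f_t]$; and (iv) solve Eq.~(\ref{eq:fixed-point}) for this expectation to conclude $\mathbb{E}[F] = \mathbb{E}[f_t] = \mathbb{E}[r_t^{m}] = R^{m}$. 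I would lean on the structure already established by DPBA~\citep{harutyunyan2015expressing}, adapting it to the goal-conditioned setting by carrying the goal $g$ through every term and noting that the goal is fixed within an episode so it does not interfere with the transition expectation.

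The hard part will be handling the expectation carefully with respect to the stochasticity in the transition matrix $\mathcal{T}$ and the policy's action selection, since $a_{t+1}$ is sampled on-policy and $s_{t+1}$ is sampled from $\mathcal{T}$. I would need to be explicit that the expectation in the convergence condition is taken jointly over $(s_{t+1}, a_{t+1})$ given $(s_t, a_t, g)$, matching exactly the expectation that defines the shaping reward, so that the two expressions can be equated term-by-term rather than only in some averaged sense. A subtlety worth flagging is whether convergence of the stochastic approximation yields $\mathbb{E}[\delta_t^{\Phi}]=0$ pointwise over state-action pairs or only in aggregate; I would argue the former under the standard assumption that every goal-conditioned state-action pair is visited infinitely often, which is what makes the fixed point of the tabular update well-defined and lets the identity $\mathbb{E}[F]=R^{m}$ hold at the level of each transition. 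The remaining manipulations are routine algebraic rearrangements of Eq.~(\ref{eq:fixed-point}).
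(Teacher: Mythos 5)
Your proposal is correct and follows essentially the same route as the paper: both characterize convergence by the TD fixed point $\Phi^{*}(s,a,g) = -R^{m}(s,a,g) + \gamma\mathbb{E}\left[\Phi^{*}(s',a',g)\right]$ (your vanishing expected TD error is just this equation rearranged) and then use linearity of expectation over $\mathcal{T}$ and the on-policy action to conclude $\mathbb{E}[F] = R^{m}$. The only cosmetic difference is ordering --- the paper substitutes the fixed-point identity into the pointwise expression for $F$ and then takes the expectation, whereas you take the expectation first and solve for $\mathbb{E}[f_t]$; your explicit flagging of the joint expectation over $(s_{t+1},a_{t+1})$ is a point the paper leaves implicit.
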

\begin{proof}
    The goal-conditioned potential function $\Phi$ follows the update rule of the Bellman Equation~\citep{bellman1966dynamic}, which enjoys the same recursive relation when the potential value has converged to the TD-fixpoint $\Phi^{*}$. Thus, we have
    \begin{align}
         \Phi^{*}(s,a,g) =& R^{\Phi}(s,a,g) + \gamma\mathbb{E}[\Phi^{*}(s',a',g)]\nonumber\\
         =& -R^{m}(s,a,g) + \gamma\mathbb{E}[\Phi^{*}(s',a',g)].
    \end{align}

    % \begin{equation}
    % \begin{aligned}
    %      \Phi^{*}(s,a,g) =& R^{\Phi}(s,a,g) + \gamma\mathbb{E}[\Phi^{*}(s',a',g)]\nonumber\\
    %      =& -R^{m}(s,a,g) + \gamma\mathbb{E}[\Phi^{*}(s',a',g)].
    % \end{aligned}
    % \end{equation}
    
    According to Eq.~(\ref{shaping-reward}), we have the shaping reward $F$ with respect to the converged potential value $\Phi^{*}$:
    \begin{align}
         F(s,a,s',a',g) &= \gamma\Phi^{*}(s',a',g) - \Phi^{*}(s,a,g) \nonumber\\
         &= \gamma\Phi^{*}(s',a',g) + R^{m}(s,a,g) \nonumber\\
         &\quad - \gamma\mathbb{E}[\Phi^{*}(s',a',g)] \nonumber\\
         &= R^{m}(s,a,g) + \gamma(\Phi^{*}(s',a',g) \nonumber\\ 
         &\quad - \mathbb{E}[\Phi^{*}(s',a',g)]).
    \end{align}
    
    % \begin{equation}
    % \begin{aligned}
    %      F(s,a,s',a',g) &= \gamma\Phi^{*}(s',a',g) - \Phi^{*}(s,a,g) \\
    %      &= \gamma\Phi^{*}(s',a',g) + R^{m}(s,a,g) \\
    %      &\quad - \gamma\mathbb{E}[\Phi^{*}(s',a',g)] \\
    %      &= R^{m}(s,a,g) + \gamma(\Phi^{*}(s',a',g) \\ 
    %      &\quad - \mathbb{E}[\Phi^{*}(s',a',g)]).
    % \end{aligned}
    % \end{equation}
    To obtain the expected shaping reward $\mathcal{F}(s,a,g)$, we take the expectation with respect to the state transition matrix $\mathcal{T}$:
    \begin{align}
        \mathcal{F}(s,a,g) &= \mathbb{E}[F(s,a,s',a',g)] \nonumber\\
        &= R^{m}(s,a,g) \! + \! \gamma\mathbb{E}\left[\Phi^{*}(s',a',g) \! - \! \mathbb{E}[\Phi^{*}(s',a',g)]\right] \nonumber\\
        &= R^{m}(s,a,g).
    \end{align}
    
    % \begin{equation}
    % \begin{aligned}
    %     \mathcal{F}(s,a,g) &= \mathbb{E}[F(s,a,s',a',g)] \\
    %     &= R^{m}(s,a,g) \! + \! \gamma\mathbb{E}\left[\Phi^{*}(s',a',g) \! - \! \mathbb{E}[\Phi^{*}(s',a',g)]\right] \\
    %     &= R^{m}(s,a,g).
    % \end{aligned}
    % \end{equation}
    Hence, the expectation of shaping reward $F$ will be equal to our magnetic reward $R^{m}$ when the goal-conditioned potential function $\Phi$ converges to the TD-fixpoint.
\end{proof}

\section{Simulation Experiments}~\label{experiments}
% There are no standard environments for goal-conditioned RL with dynamic target and obstacles, and hence we create our own environments to evaluate the proposed method.
To evaluate our method, we build a 3-D simulated robotic manipulation platform based on the software CoppeliaSim, formerly known as V-REP~\citep{rohmer2013v}, and design a set of challenging goal-conditioned RL tasks with continuous state-action space. 
Being analogous to the FetchReach-v1~\citep{plappert2018multi} task in OpenAI Gym~\citep{brockman2016openai}, our tasks require the agent to move the end-effector of a robotic arm to dynamic targets with different positions, but differ from FetchReach-v1 in additional dynamic obstacles that the agent should avoid.
% The overarching questions that we aim to study from our experiments include the following.
% \begin{itemize}
% \item[Q1] Does MFRS successfully improve the sample efficiency in goal-conditioned RL tasks with various dynamics of the target and obstacles?
% \item[Q2] How does each component in MFRS affect the performance respectively?
% \end{itemize}

\subsection{Environment Settings}\label{env-setting}
% Since we aim at improving the sample efficiency of RL algorithms in goal-conditioned RL tasks by reward shaping approach, we evaluate MFRS in comparison to four following baseline methods.
Sections \ref{compare-exp} and \ref{ablation} present the results and insightful analysis of our findings. In the experiments, we evaluate MFRS in comparison to several baseline methods as follows:
\begin{enumerate}
	\item \textbf{No Shaping (NS)}: As the basic sparse reward condition, it trains the policy with the original reward given by the environment without shaping.

	\item \textbf{PBRS}~\citep{ng1999policy}: It adds a calculated shaping reward $F=\gamma\Phi(s', g)-\Phi(s, g)$ to the original reward, where the potential function is built on Euclidean distance as $\Phi(s,g)=-d(p_a,p_g)+\frac{1}{N}\sum_{i=1}^{N}d(p_a,p_{o_i})$.
	
	\item \textbf{DPBA}~\citep{harutyunyan2015expressing}: It adds a learned shaping reward $F=\gamma\Phi_{t'}(s',a',g)-\Phi_{t}(s,a,g)\approx R^{\dagger}$ to the original reward, where the objective reward function is built on Euclidean distance as $R^{\dagger}=-d(p_a,p_g)+\frac{1}{N}\sum_{i=1}^{N}d(p_a,p_{o_i})$.
	
	\item \textbf{HER}~\citep{andrychowicz2017hindsight}: As the famous relabeling strategy in goal-conditioned RL, it relabels the desired goals in the replay buffer with the achieved goals in the same trajectories. In our experiments, we adopt the default ``final'' strategy in HER that chooses the additional goals corresponding to the final state of the environment.
	 
	\item \textcolor{black}{
	% \textbf{SR}~[29]: It samples two sibling episodes for each goal simultaneously, uses each others' achieved goals as anti-goals $\overline{g}$ and engineer an additional reward bonus $d(s, \overline{g})$ to encourage exploring and avoid local optima, where $d$ is the Euclidean distance.
	\textbf{Sibling Rivalry (SR)}~\citep{trott2019keeping}: It samples two sibling episodes for each goal simultaneously and uses each others' achieved goals as anti-goals $\overline{g}$. Then, it engineers an additional reward bonus $d(s, \overline{g})$ to encourage exploring and avoid local optima, where $d$ is the Euclidean distance.}
	
%	\item \textcolor{black}{
%	\textbf{Adversarial Intrinsic Motivation (AIM)}~[41]: It defines the target distribution in goal-conditioned RL as a Dirac measure at the goal state, and utilizes the Wasserstein-1 distance between the state visitation distribution and the target distribution as a supplemental reward function.}
	\item \textcolor{black}{
	\textbf{Adversarial Intrinsic Motivation (AIM)}~\citep{durugkar2021adversarial}: It aims to learn a goal-conditioned policy whose state visitation distribution minimizes the Wasserstein distance to a target distribution for a given goal, and utilizes the Wasserstein distance to formulate a shaped reward function, resulting in a nonlinear reward shaping mechanism.}
\end{enumerate}

\begin{figure}[tb]
\centering
\includegraphics[width=0.56\columnwidth]{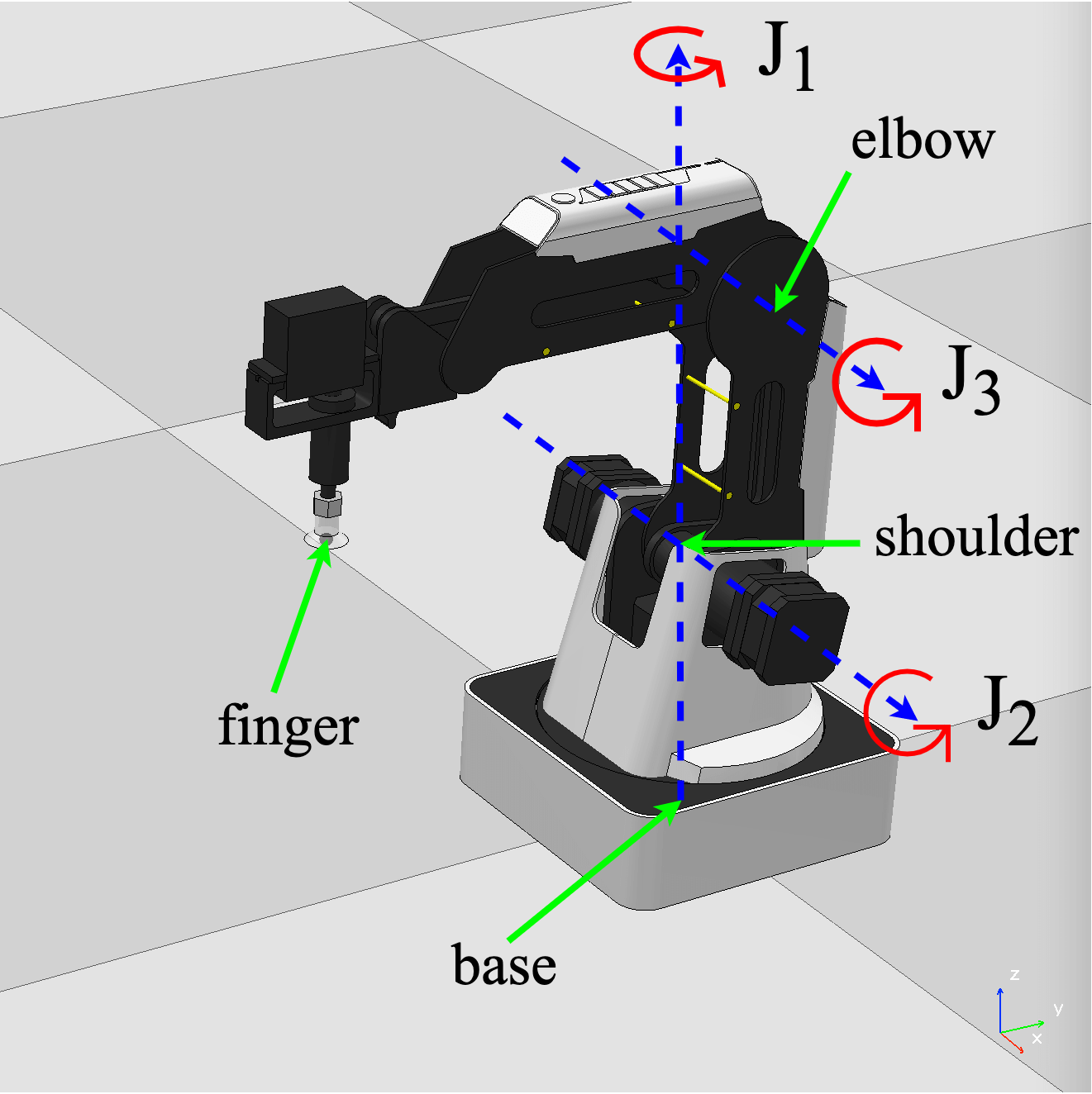}
\caption{The configuration of Dobot Magician, where ``base'', ``shoulder'', and ``elbow'' represent three motors with the joints denoted by $J_1$, $J_2$, and $J_3$, respectively, and ``finger'' represents the end-effector.}
\label{fig:Dobot-env}
\end{figure}

We use Dobot Magician~\citep{islamcartesian} as the learning agent in our experiments, which is a 3-degree of freedom (DOF) robotic arm in both CoppeliaSim simulated environment and real-world industrial applications.
Dobot Magician has three stepper motors to actuate its joints $J_1$, $J_2$, and $J_3$, as shown in Fig.~\ref{fig:Dobot-env}, which can achieve rotation angles within the range of $[-90\degree, 90\degree], [0\degree, 85\degree], [-10\degree, 90\degree]$, respectively. 
Therefore, we consider the action as a 3-dimensional vector clipped to be in the range of $[-1\degree, 1\degree]$, representing the rotation velocity of these three joints.
Being analogous to FetchReach-v1, the observations include the Cartesian positions of the ``elbow'', ``finger'', and the rotation angles of the three joints.
% To evaluate our proposed method in different scenarios, we develop four goal-conditioned tasks with various dynamics configurations of the target and obstacles based on Dobot Magician in CoppeliaSim simulation software.
% the result and analysis of our findings will be presented in detail in the following two subsections.

\begin{figure*}[tb]
	\centering
	\subfigure[Task I: the target is dynamic while the single obstacle is not.]{\includegraphics[width=0.96\columnwidth]{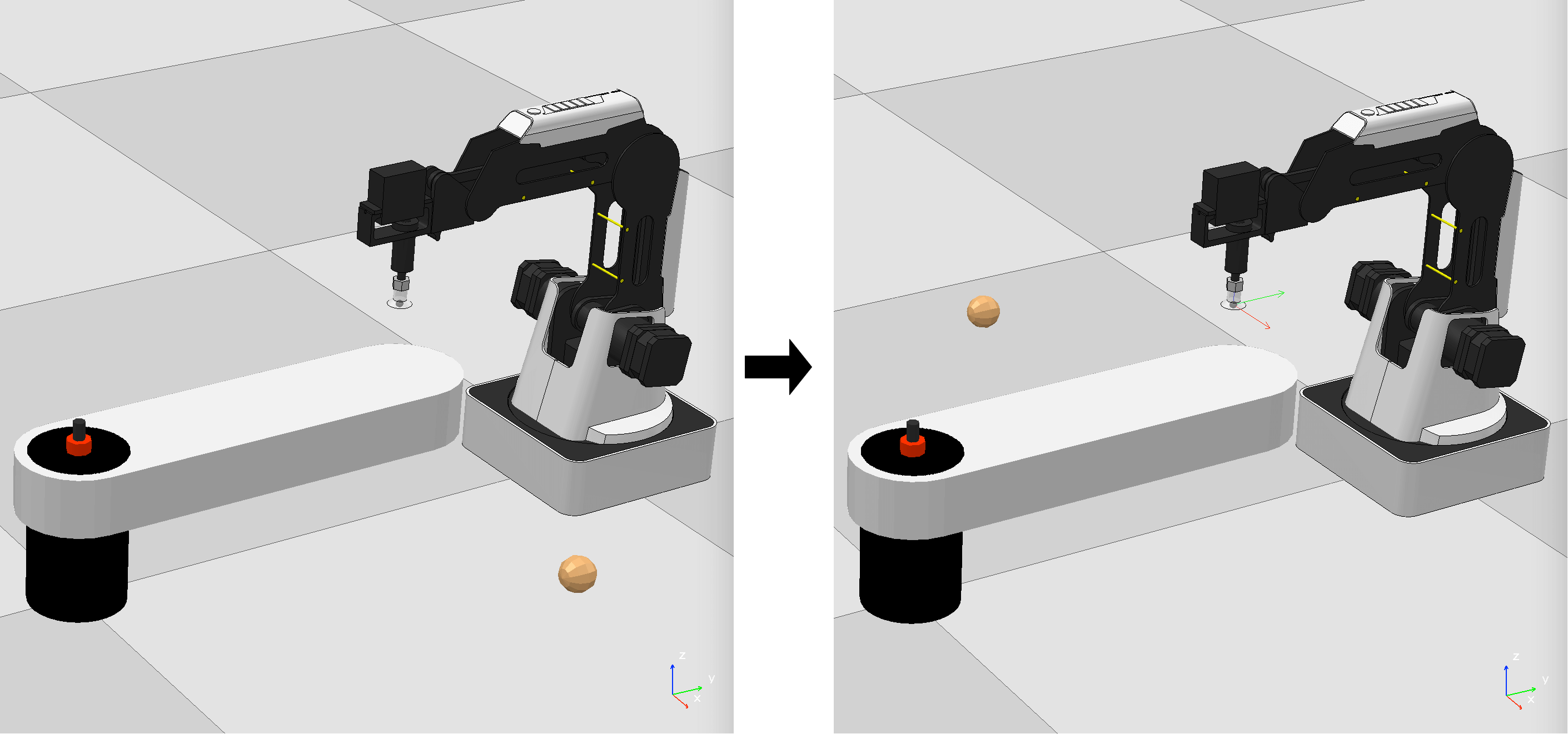}}
	\subfigure[Task II: both the target and single obstacle are dynamic.]{\includegraphics[width=0.96\columnwidth]{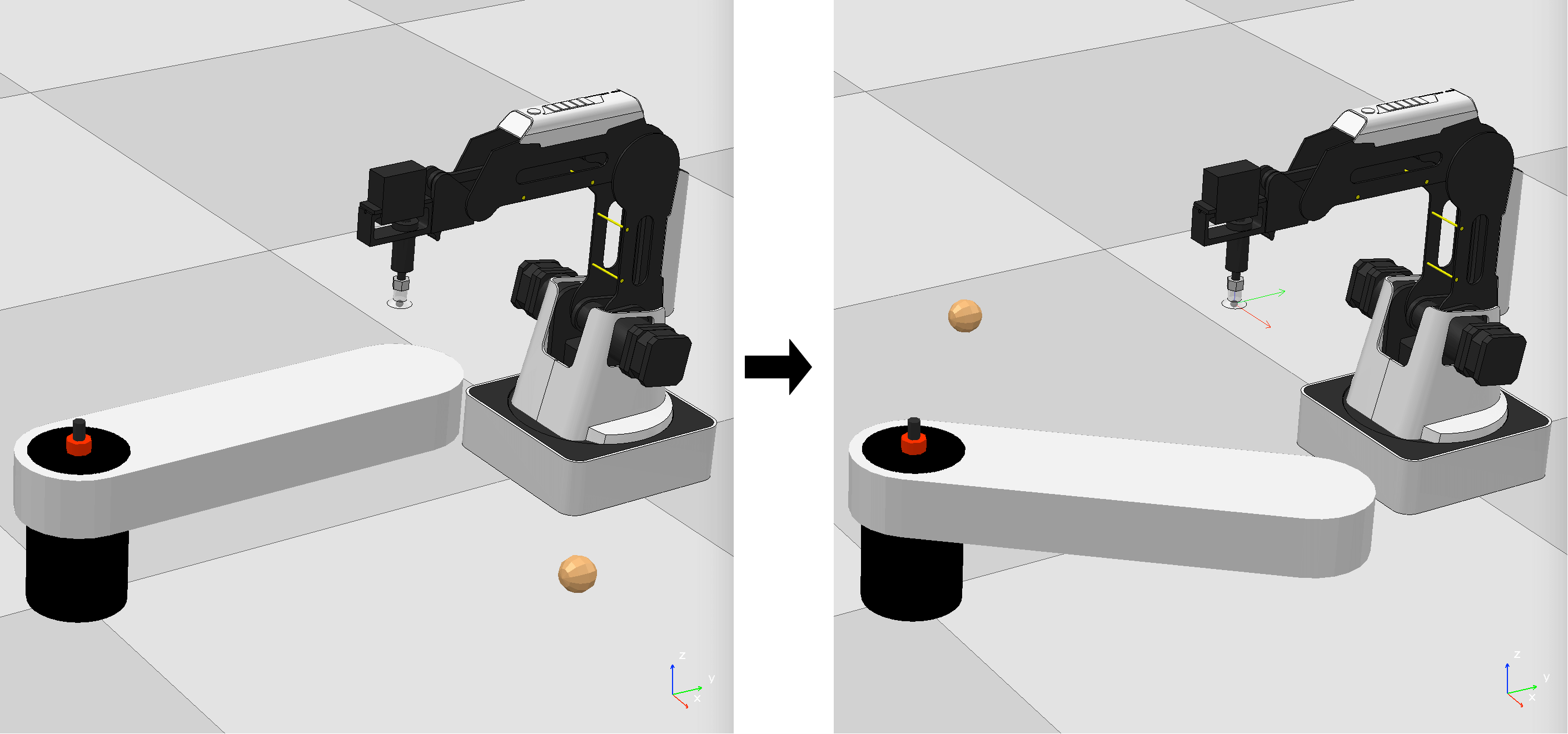}}
	\subfigure[Task III: the multiple obstacles are dynamic while the target is not.]{\includegraphics[width=0.96\columnwidth]{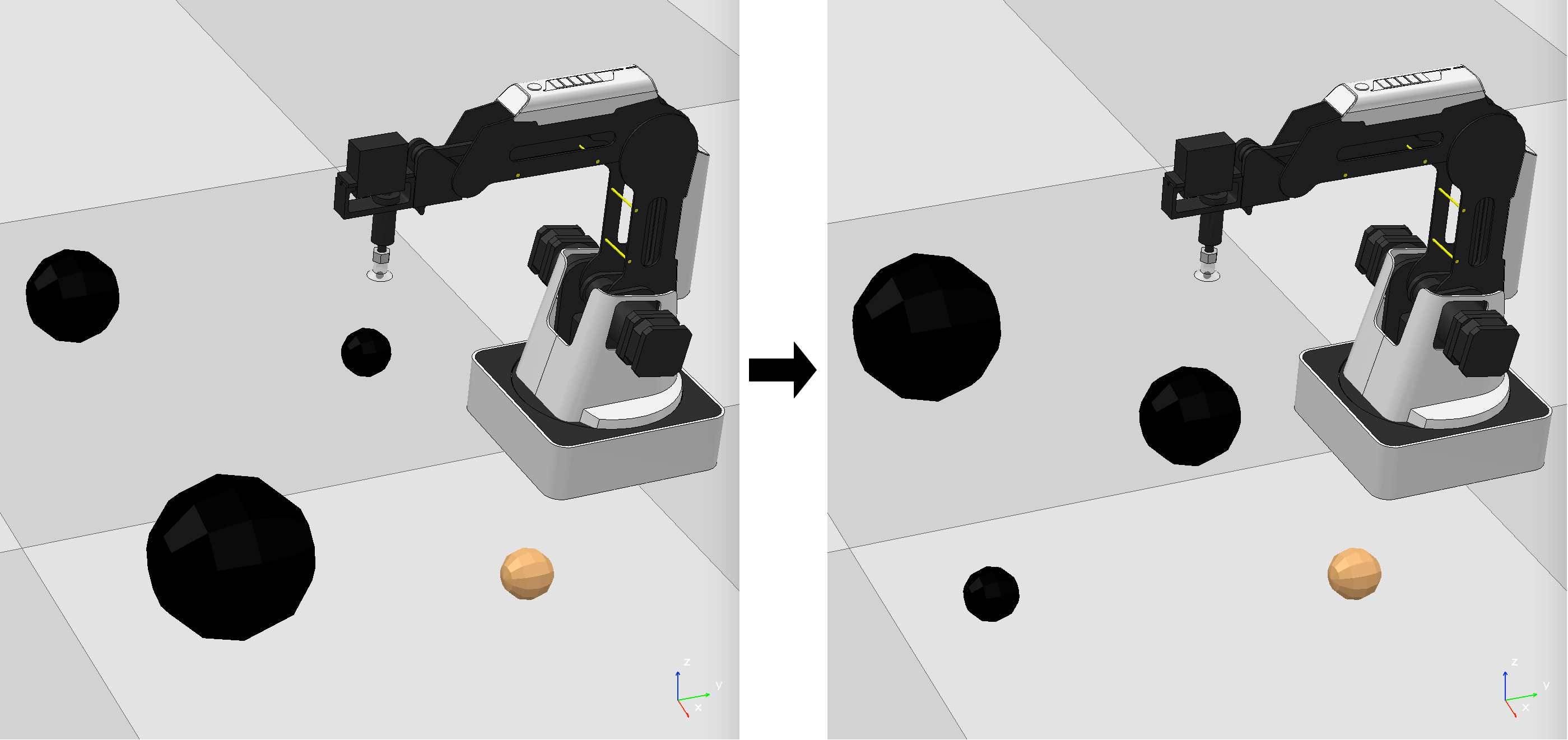}}
	\subfigure[Task IV: both the target and multiple obstacles are dynamic.]{\includegraphics[width=0.96\columnwidth]{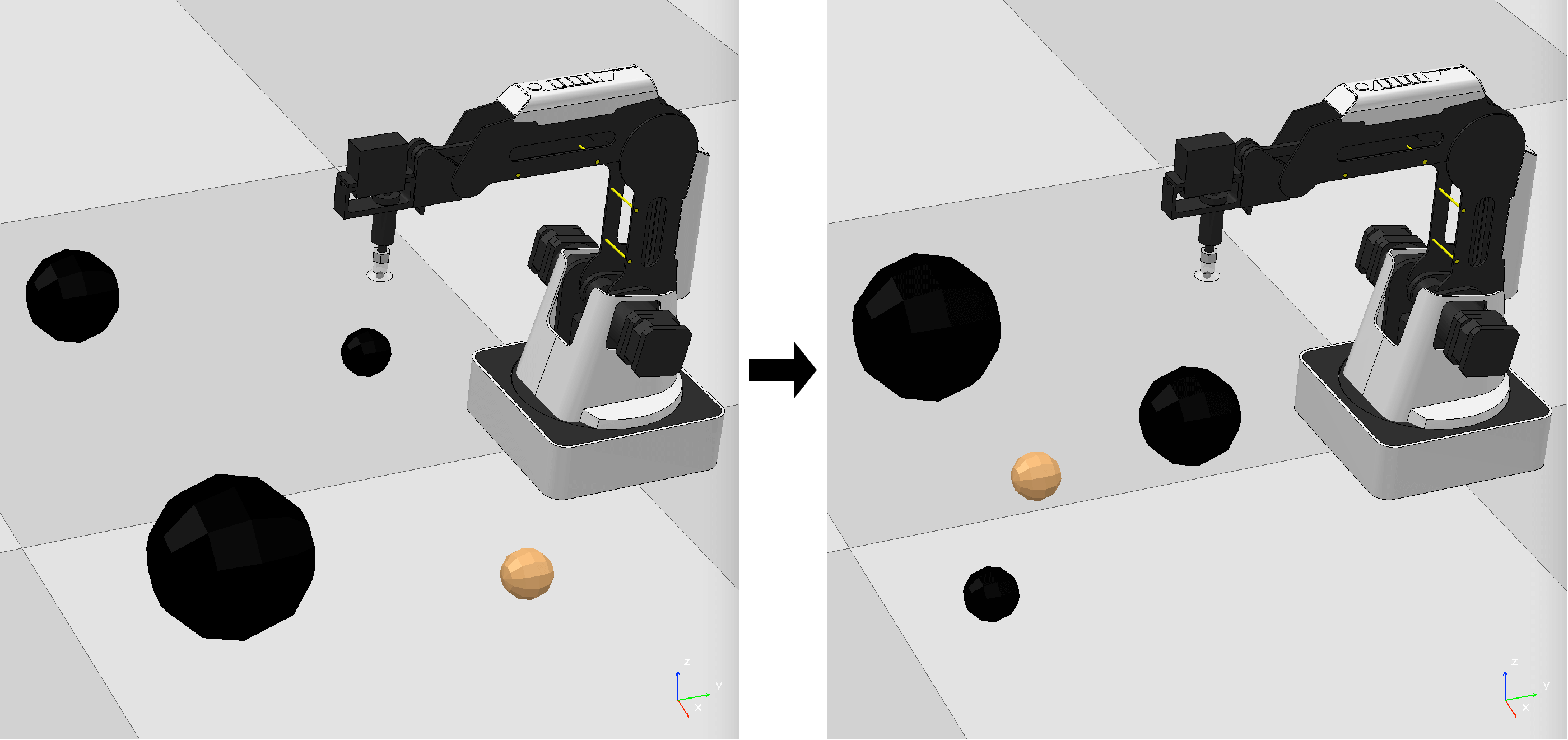}}
	
	\caption{Examples of four goal-conditioned tasks with different dynamics of the target and obstacles.
	The spherical targets in all the tasks are shown in yellow, the single cuboid obstacles in Task I and Task II are shown in white, and the multiple spherical obstacles in Task III and Task IV with various sizes are shown in black.}
	\label{fig:env-tasks}
\end{figure*}

In our experiment, we use Deep Deterministic Policy Gradient (DDPG)~\citep{lillicrap2015continuous} as the base algorithm to evaluate all the investigated methods under the same configurations in the goal-conditioned RL setting.
Specifically, the goal-conditioned policy is approximated by a neural network with two 256-unit hidden layers separated by ReLU nonlinearity as the actor, which maps each state and goal to a deterministic action and updates its weights using Adam optimizer~\citep{kingma2014adam} with a learning rate of $\alpha=3\times 10^{-4}$.
The goal-conditioned Q-function is also approximated by a neural network that maps each state-action pair and goal to a certain value as the critic, which has the same network structure and optimizer as the actor with a learning rate of $\beta=10^{-3}$.
The target networks of actor and critic update their weights by slowly tracking the learned actor and critic with $\tau=0.001$.
We train the DDPG whenever an episode terminates with a minibatch size of 128 for 100 iterations.
In addition, we use a replay buffer size of $10^{6}$, a discount factor of $\gamma=0.99$, and a Gaussian noise $\mathcal{N}(0,0.4)$ to each action for exploration.
For MFRS and DPBA methods, the goal-conditioned potential function $\Phi$ is approximated by a neural network with the same structure and optimizer as the critic network and a learning rate of $\eta=10^{-4}$, which is updated for each timestep in an on-policy manner.
For MFRS in particular, we set the magnet buffer size of $10^{6}$ and the small constant of $\epsilon=10^{-7}$.

For each report unit, we define two performance metrics.
% (a particular algorithm running on a particular task)
One is the success rate in each learning episode, where success is defined as the agent reaching the target without hitting on the obstacles.
The other is the average timesteps for each trajectory over all learning episodes, defined as $\frac{1}{E}\sum_{k=1}^{E}T_{k}$, where $E$ is the number of learning episodes and $T_{k}$ is the terminal timestep in the $k$th episode.
The former is plotted in figures, and the latter is presented in tables.
Moreover, due to the randomness in the update process of the neural network, we repeat five runs for each policy training by taking different random seeds for all methods, and report the mean regarding the performance metrics.
% All the algorithms are implemented with python 3.8 running on Ubuntu 18 with 16 Intel Core i9-11900K 3.50GHz CPU processes, 16-GB RAM, and a NVIDIA GeForce 3060Ti of 8-GB memory.
Our code is available online.
\footnote{\href{https://github.com/Darkness-hy/mfrs}{https://github.com/Darkness-hy/mfrs}}

\subsection{Primary Results}\label{compare-exp}
To evaluate our method in different scenarios, we develop four goal-conditioned tasks with various dynamics configurations of the target and obstacles, on which we implement MFRS and baseline methods.
All four tasks require the agent to move the ``finger'' into a target region without hitting the obstacles in each episode.
Accordingly, we define the original reward function as: $r(s,a,g)=100$ if the ``finger'' successfully reaches the target, $r(s,a,g)=-10$ if the agent hits the obstacle or the floor in CopperliaSim, and $r(s,a,g)=-1$ otherwise.
% To address Q1, we implement MFRS and baseline methods on four different goal-conditioned tasks, which includes two with only one obstacle we care about and two with three obstacles in various sizes.
Each episode starts by sampling the positions of the target and obstacles with the rotation angles of the agent's three joints reset to zero, and terminates whenever the agent reaches the target or at the horizon of $H=1000$.

\subsubsection{Goal-conditioned tasks with different goal dynamics}
In the first two tasks, the obstacle is a cuboid rotator revolving around a fixed cylindrical pedestal, where the length, width, and height of the rotator are $0.1$, $0.4$, $0.05$ along the $x$-axis, $y$--axis, $z$-axis, and the radius, height of the pedestal are $0.04$, $0.13$, respectively. The rotator's axis coincides with the pedestal's axis, and the top surfaces of the rotator and pedestal are on the same plane.
We represent the obstacle position by the center point coordinate of the rotator determined by its rotation angle from an initial position, where the positive direction of rotation is defined to be anticlockwise, with the rotator's initial position being parallel to the $y$-axis.
In the other two tasks with higher complexity, the agent has to handle three dynamic spherical obstacles with a radius of $0.02$, $0.04$, and $0.06$, respectively. We represent the obstacles' positions by the center point coordinates of their own spheres.
Besides, the target is a sphere with a radius of 0.02 in all four tasks where the target position is the center point coordinate.

\begin{itemize}[leftmargin=1em]
\item \textbf{Task I:} As shown in Fig.~\ref{fig:env-tasks}-(a), the dynamic of the goal is determined by randomly changing the target position, which is sampled uniformly from the space both in the reachable area of “finger” and below but not under the lower surface of the single static obstacle.
\item \textbf{Task II:} As shown in Fig.~\ref{fig:env-tasks}-(b), the dynamic goal is created by changing the positions for both the target and the single obstacle. The change of obstacle position can be converted into sampling rotation angles of the rotator, which is defined as a uniform sampling within $[-60\degree, 60\degree]$. The change of target position is consistent with the one in Task I.
\item \textbf{Task III:} As shown in Fig.~\ref{fig:env-tasks}-(c), the goal dynamics is determined by changing the positions of multiple obstacles while holding the target still, and all the target and obstacles are ensured to be not intersectant.
\item \textbf{Task IV:} As shown in Fig.~\ref{fig:env-tasks}-(d), this kind of dynamic goal is created by sampling the positions of all the target and multiple obstacles simultaneously, which is considered the most complex of the four tasks. Also, we need to ensure that all the target and obstacles are not intersectant.
\end{itemize}

\begin{figure*}[tb]
	\centering
	\subfigure[Task I]{\includegraphics[width=0.24\textwidth]{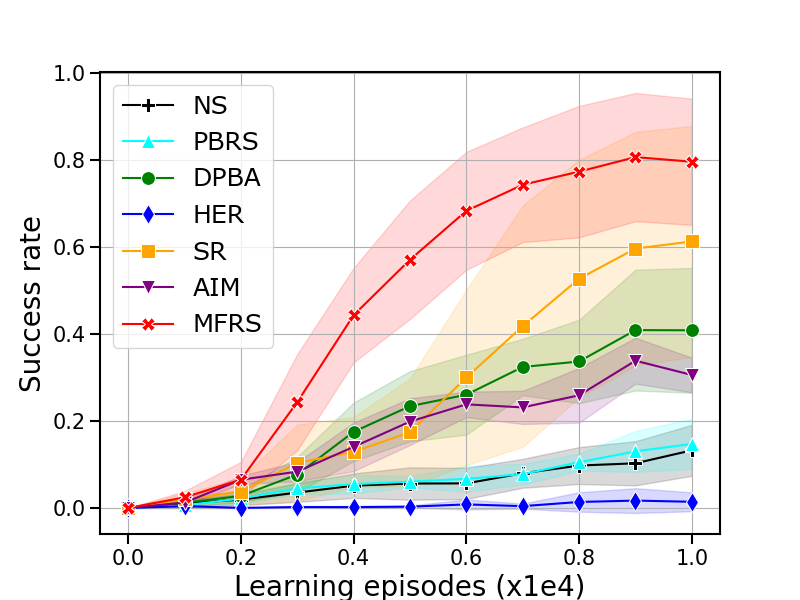}}
	\subfigure[Task II]{\includegraphics[width=0.24\textwidth]{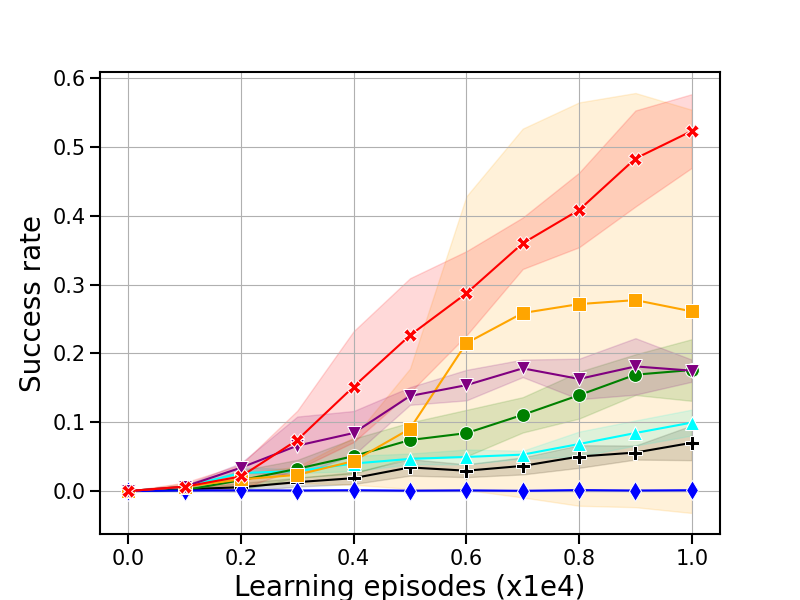}}
	\subfigure[Task III]{\includegraphics[width=0.24\textwidth]{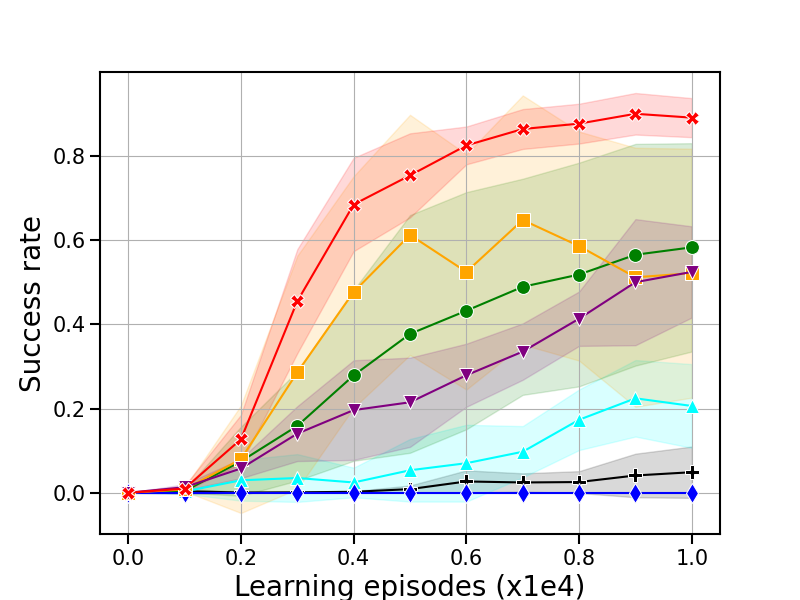}}
	\subfigure[Task IV]{\includegraphics[width=0.24\textwidth]{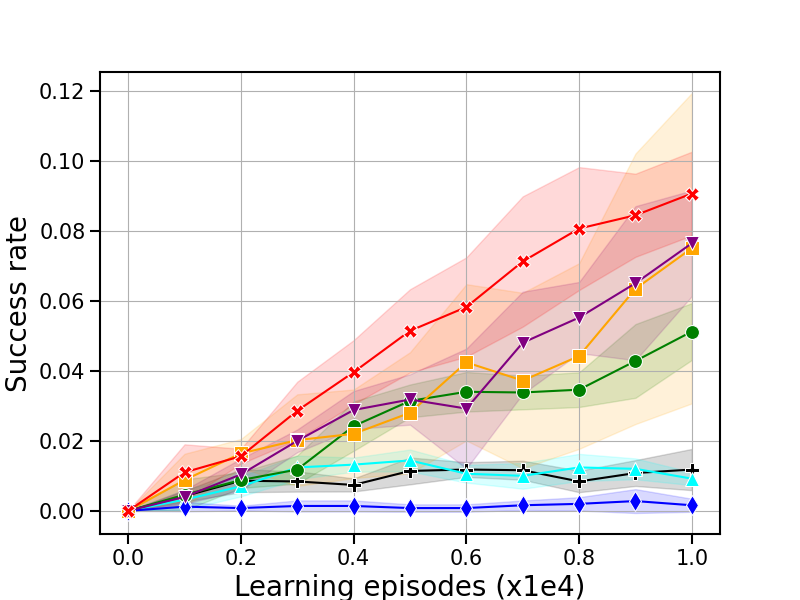}}
	
 	\caption{\textcolor{black}{Success rate in each learning episode of all investigated methods implemented on four different goal-conditioned tasks.}}
	\label{fig:res-success}
\end{figure*}

\begin{figure*}[tb]
	\centering
	\subfigure[Task I]{\includegraphics[width=0.24\textwidth]{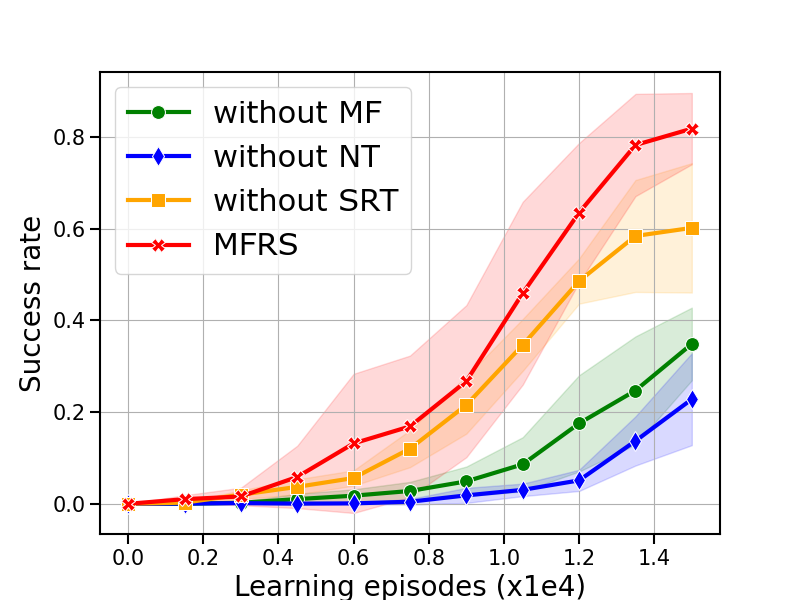}}
	\subfigure[Task II]{\includegraphics[width=0.24\textwidth]{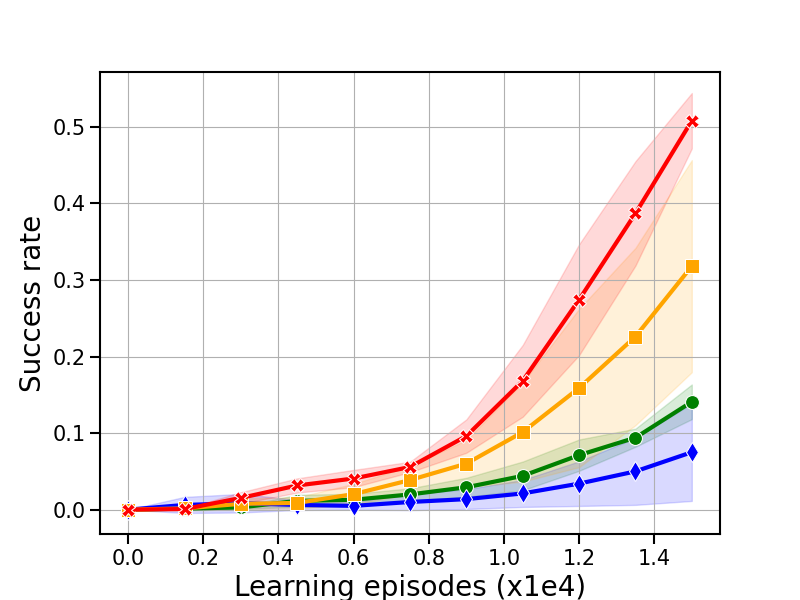}}
	\subfigure[Task III]{\includegraphics[width=0.24\textwidth]{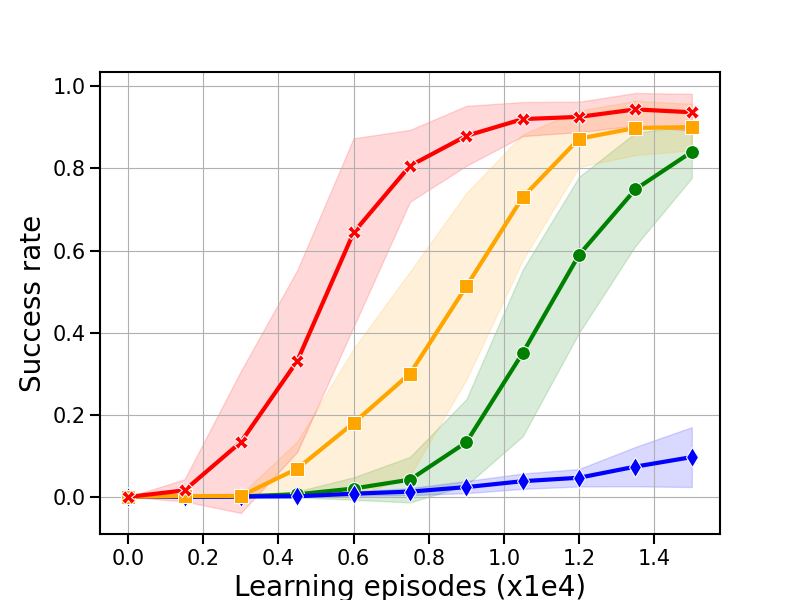}}
	\subfigure[Task IV]{\includegraphics[width=0.24\textwidth]{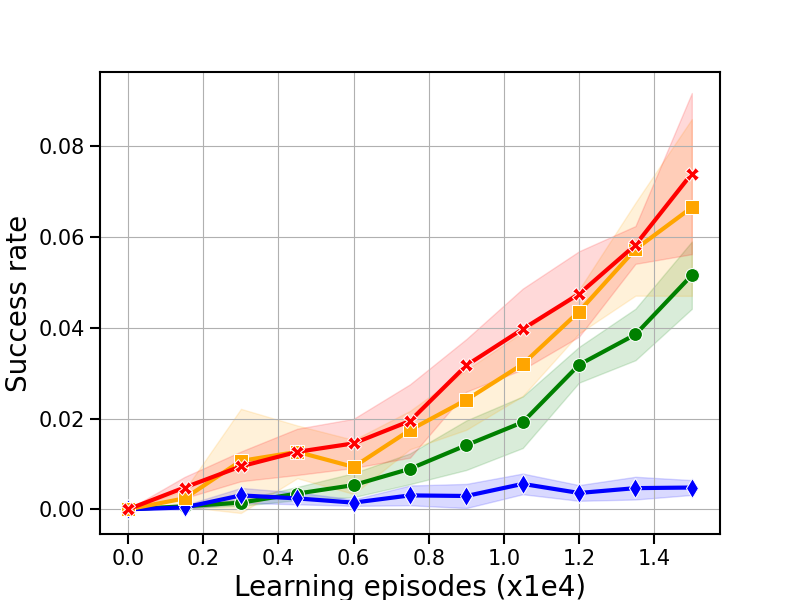}}
	
 	\caption{Success rate in each learning episode of the three variants together with MFRS implemented on four different goal-conditioned tasks.}
	\label{fig:ab-res-success}
\end{figure*}

\subsubsection{Results of MFRS}
We present the experimental results of MFRS and all baselines implemented on the four goal-conditioned tasks.
Fig.~\ref{fig:res-success} shows the success rate in each learning episode, of which the mean across five runs is plotted as the bold line with $90\%$ bootstrapped confidence intervals of the mean painted in the shade.
Table~\ref{table:res-steps} reports the numerical results in terms of average timesteps over 10000 learning episodes of all tested methods. The mean across five runs is presented, and the confidence intervals are corresponding standard errors. The best performance is marked in boldface.

\textcolor{black}{Surprisingly, HER obtains the worst performance and even gets a lower success rate than NS. We conjecture that the relabeling strategy may mislabel the obstacles as the targets when considering the obstacles as a part of the goal.
PBRS achieves a slightly higher success rate than NS since it incorporates additional distance information into the potential function.
DPBA obtains better performance than PBRS, indicating that directly encoding the distance information as the shaping reward and learning a potential function in turn can be more informative and efficient.
AIM has similar performance compared to DPBA and performs better in complex tasks (Task II and Task IV), which is supposed to benefit from the estimated Wasserstein-1 distance between the state visitation distribution and target distribution.
SR performs best among the baseline methods on account of its anti-goals mechanism that encourages exploring and avoids local optima, while it suffers from large confidence intervals since SR fails to consider obstacles and can not guarantee the optimal policy invariance property.
}

\textcolor{black}{In contrast, it can be observed from Fig.~\ref{fig:res-success} that MFRS outperforms all the baseline methods with a considerably larger success rate in all the four tasks, which is supposed to benefit from the explicit and informative optimization landscape provided by the magnetic field intensity.}
The performance gap in terms of success rate at the end of training varies in different tasks. For instance, in Task I, MFRS achieves a success rate of $79.66\%$ at the last learning episode while the baseline methods have only increased the success rate to no more than $61.32\%$, which is $29.91\%$ improved performance at least.
Likewise, the performance in terms of success rate is improved by $100.38\%$ in Task II, $52.85\%$ in Task III and $18.59\%$ in Task IV.
In addition, being different from the research efforts on goal-conditioned RL, our method aims at using reward shaping method to solve sparse reward problems for goal-conditioned RL, which is capable of extending the goal setting to not only dynamic target but also dynamic obstacles while holding the optimal policy invariance property at the same time.
\textcolor{black}{
Moreover, from the large performance gap in terms of success rate between MFRS and AIM, we can deduce that MFRS has the potential to maintain the superiority over other nonlinear reward shaping mechanisms.}
% From Fig.~\ref{fig:res-success}, it can be observed that MFRS outperforms all the baseline methods with a considerably larger success rate in all four tasks.
% The performance gap in terms of success rate per episode is more pronounced as the tasks get complex, which is supposed to benefit from the explicit and informative optimization landscape provided by MFRS. For instance, in task II with a dynamic target and a dynamic obstacle, MFRS achieves a success rate of around $50\%$ at the end of the training, while all the baseline methods have only increased the success rate to no more than $20\%$.
% Moreover, there is a surprising finding that HER obtains the worst performance, and even worse than the basic No Shaping method. We deem it as the consequence of HER relabeling the desired goals without taking obstacles into consideration, which may mislabel the obstacles position as the target and mislead the policy learning.

\begin{table}[tb]
\caption{\textcolor{black}{Numerical results in terms of average timesteps over all learning episodes of tested methods in four different goal-conditioned tasks.}}
\centering
\setlength{\tabcolsep}{0.8mm}\renewcommand\arraystretch{1.2}
\begin{tabular}{c|c|c|c|c}
\cmidrule[\heavyrulewidth]{1-5}
Task    & Task I                & Task II               & Task III              & Task IV \\
\hline 
NS      & $939.9\pm 12.7$       & $970.0\pm 4.0$        & $982.8\pm 9.0$        & $990.9\pm 0.9$ \\
PBRS    & $933.1\pm 9.9$        & $952.1\pm 2.7$        & $902.8\pm 29.4$       & $989.9\pm 0.4$ \\
DPBA    & $785.3\pm 31.0$       & $916.5\pm 83.6$       & $660.7\pm 86.3$       & $972.7\pm 1.7$ \\
HER     & $993.4\pm 4.8$        & $999.0\pm 106.2$      & $1000.0\pm $ $0.0$    & $998.6\pm 0.5$ \\
SR      & $684.3\pm 78.3$       & $842.0\pm 78.9$       & $524.5\pm $ $111.2$   & $962.9\pm 9.5$ \\
AIM     & $807.8\pm 7.5$        & $859.9\pm 9.4$        & $731.6\pm $ $17.7$    & $960.1\pm 1.9$ \\
MFRS   	& $\bm{513.1\pm 46.4}$  & $\bm{760.8\pm 62.0}$  & $\bm{380.5\pm 18.7}$  & $\bm{948.6\pm 4.3}$ \\

\cmidrule[\heavyrulewidth]{1-5}
\end{tabular}
\label{table:res-steps}
\end{table}

From Table~\ref{table:res-steps}, it can be obtained that MFRS achieves significantly smaller average timesteps over 10000 learning episodes than all the baselines in all the tasks, which means fewer sampled transitions will be required for the agent to learn the policy. 
\textcolor{black}{Specifically, in Task I, MFRS achieves $513.1$ timesteps in average over all learning episodes while the NS method acquires $939.9$ timesteps.
It indicates that MFRS obtains a $45.4\%$ reduction of sampling transitions.
% The same decrease of timesteps is $21.6\%$ in Task II, $61.3\%$ in Task III and $4.3\%$ in Task IV.
The decrease of timesteps is $21.6\%$ in Task II, $61.3\%$ in Task III, and $4.3\%$ in Task IV.
% Hence, our method successfully improves the sample efficiency of RL algorithm in goal-conditioned setting.
Hence, our method successfully improves the sample efficiency of RL in the goal-conditioned setting.
% In summary, consistent with the statement in Section III-A, it is verified that MFRS is able to provide sufficient and conducive information about the complex environments with various dynamics of the target and obstacles, achieving significantly better performance than the sparse reward condition in different scenarios.
In summary, being consistent with the statement in Section III-A, it is verified that MFRS is able to provide sufficient and conducive information about the complex environments with various dynamics of the target and obstacles, achieving significant performance for addressing the sparse reward problem in different scenarios.}
%Hence, our method successfully improves the sample efficiency of the RL algorithm.
% Moreover, it can be observed from the statistical results that MFRS obtains relatively smaller standard errors than the baselines, which indicates that our method provides a more stable learning process in dynamic environments.
%In summary, consistent with the statement in Section~\ref{overview}, it is verified that MFRS is capable of handling various dynamics of the target and obstacles in goal-conditioned RL tasks, providing significantly better performance in different scenarios.

\subsection{Ablation Study}\label{ablation}
To figure out how the three critical components of our method affect the performance respectively, we perform additional ablation studies using a control variables approach to separate each process apart as the following variants of MFRS.
% In each task shown in Fig.\ref{fig:env-tasks}, the following three variants of MFRS are applied to evaluate the contribution of each components to the proposed method.
\begin{itemize}[leftmargin=1em]
\item \textbf{without MF (Magnetic Field):} The shaping reward is calculated based on Euclidean distance, instead of the magnetic field, with normalization techniques and converted into the form of potential-based reward shaping.
\item \textbf{without NT (Normalization Techniques):} The shaping reward is calculated based on the magnetic field without normalization techniques and directly converted into the potential-based form.
\item \textbf{without SRT (Shaping Reward Transformation):} The shaping reward directly follows the form of potential-based reward shaping, with the potential value calculated according to the magnetic field with normalization techniques.
\end{itemize}
% In ablation experiments, we use neural networks with two 256-unit hidden layers to approximate goal-conditioned policy, Q-function and potential function. The iteration times of policy update in each episode is set to 20, and the replay buffer as well as the magnet buffer has a capacity size of $10^{6}$. Other hyper-parameters are consistent with the ones in \ref{compare-exp}.
The learning performance in terms of success rate per episode of the three variants as well as MFRS is shown in Fig.\ref{fig:ab-res-success}.

First and foremost, the variant \textit{without MF} is compared to MFRS to identify how magnetic field-based shaping reward improves the sample efficiency against distance-based setting.
We can observe that MFRS outperforms the variant \textit{without MF} with a considerably larger success rate per episode in all four tasks.
Specifically, in task III, it takes the variant \textit{without MF} 15000 episodes to achieve a success rate of around $80\%$, while MFRS only needs 8000 episodes to do that.
\textcolor{black}{It verifies that magnetic field-based shaping reward is able to provide a more explicit and informative optimization landscape for policy learning than the distance-based setting. It is consistent with the statement in Section \ref{introduction} that the nonlinear and anisotropic properties of the generated magnetic field provide a sophisticated reward function that carries more accessible and sufficient information about the optimization landscape, thus resulting in a sample-efficient method for goal-conditioned RL.}
 
Next, the variant \textit{without NT} is compared to MFRS to verify the effectiveness of normalization techniques.
% that unify the calculated values of magnetic field intensity and prevent the value exploding problem.
Obviously, when taking no account of any normalization in our method, the algorithm performs terribly in all four tasks, which is worse as the task gets more complex.
\textcolor{black}{It verifies the assumption in Section \ref{mag-srf} that if the intensity scales of some magnets are much larger than others, it may amplify the effect of these magnets and give an incorrect reward signal to the agent.
On the other hand, the tremendous intensity value in the very near region of the target will also exacerbate this problem by inducing the agent to slow down the steps to the target, so that it can obtain higher cumulative rewards than directly arriving at the target and terminating the episode.}

Finally, the variant \textit{without SRT} is compared to MFRS for verifying the contribution of Shaping Reward Transformation to the performance of our method.
From Fig.\ref{fig:ab-res-success}, it can be observed that 
% though we can get a relatively good result when directly following the form of potential-based reward shaping to ensure the optimal policy invariance without the transformation of magnetic reward,
MFRS outperforms the variant \textit{without SRT} with improved performance in terms of success rate per episode to some extent, and the performance gap is increasing along the learning process, especially in complex tasks.
\textcolor{black}{It verifies that the magnetic reward is more informative and effective when approximated by the potential-based shaping reward instead of being regarded as the potential function for the guarantee of optimal policy invariance.
On the other hand, one may alternatively select this simplified form of the \textit{without SRT} variant to achieve similar effects of MFRS, which avoids learning a secondary potential function at the cost of some performance loss in practice.}

\section{Application to Real Robot}\label{sim2real}
To further demonstrate the effectiveness of our method in the real-world environment, we evaluate the performance of MFRS and baselines on a physical Dobot Magician robotic arm.
The robot integrates a low-level controller to move the stepper motor of each joint and provides a Dynamic Link Library (DLL) for the user to measure and manipulate the robot's state, including the joint angles and the position of the end-effector. Due to the cost of training on a physical robot, we first train the policies in simulation and deploy them on a real robot without any finetuning. 
\textcolor{black}{The video is available online.}
%\footnote{\color{black}{\href{https://hongyuding.wixsite.com/mfrs}{https://hongyuding.wixsite.com/mfrs} \\ 
%\href{https://www.bilibili.com/video/BV1784y1z7Bj}{https://www.bilibili.com/video/BV1784y1z7Bj}}
\footnote{\href{https://hongyuding.wixsite.com/mfrs}{https://hongyuding.wixsite.com/mfrs} \\ \quad (or \href{https://www.bilibili.com/video/BV1784y1z7Bj}{https://www.bilibili.com/video/BV1784y1z7Bj})}

\subsection{Training in Simulation}
From the real-world scenario, we consider a goal-conditioned task with a dynamic target and a dynamic obstacle, as shown in Fig.~\ref{fig:real-dobot}-(b), and build the same simulated environment accordingly using CoppeliaSim as shown in Fig.~\ref{fig:real-dobot}-(a).
Both the target and obstacle are in the shape of cuboid. The target cuboid has the length, width, and height of $0.03$, $0.045$, $0.02$ along the $x$-axis, $y$--axis, $z$-axis in simulation, and $0.038$, $0.047$, $0.12$ for obstacle cuboid, respectively.
The dynamic of the goal is generated by randomly changing the positions of both the target and obstacle in each episode, which are ensured to be not intersectant.
The observations, actions, and reward function are consistent with the ones in \ref{experiments}, and all the hyper-parameters are set to be the same as those in \ref{ablation}.

\begin{figure}[tb]
\centering
\subfigure[Simulation environment]{\includegraphics[width=0.24\textwidth]{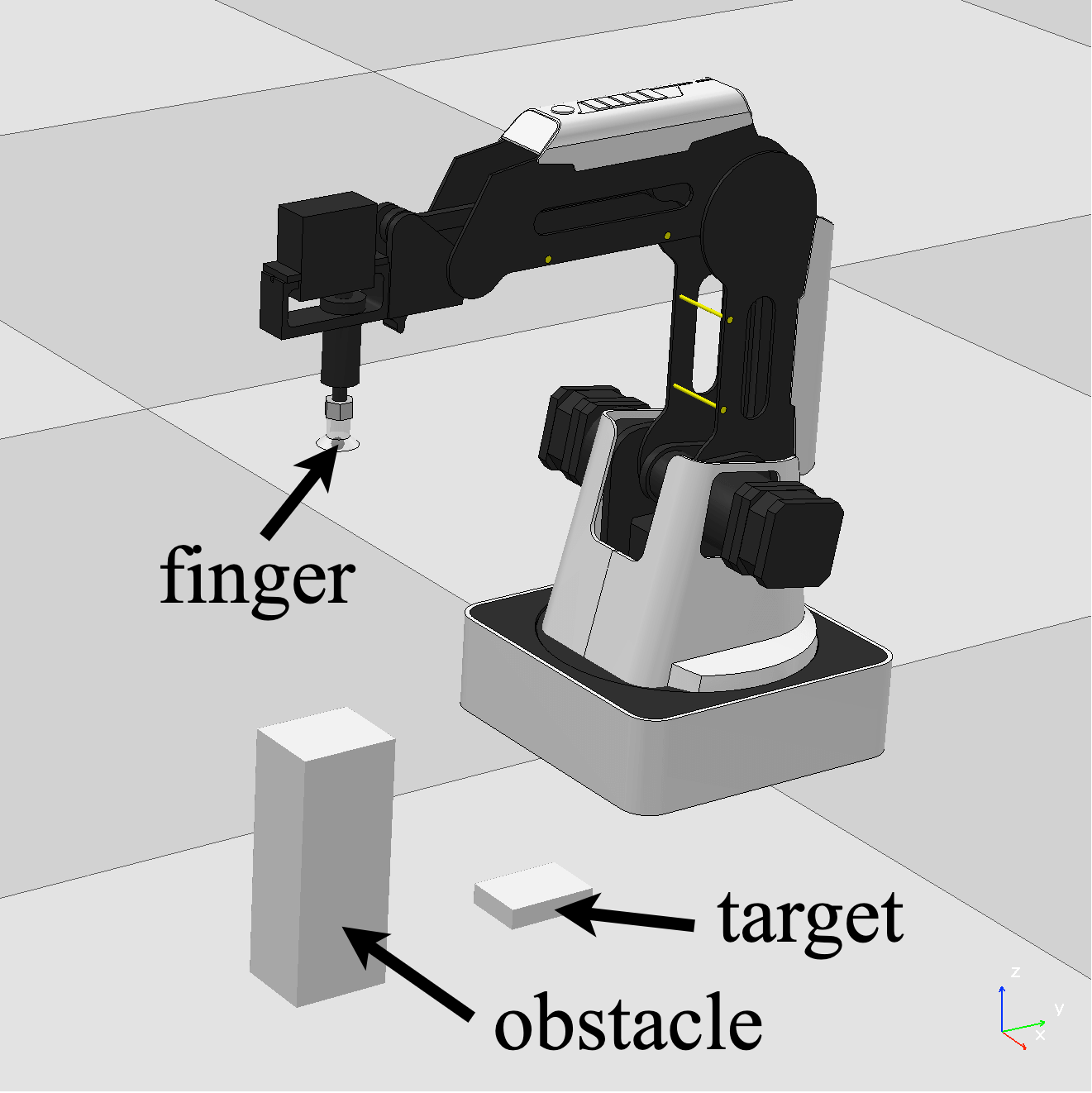}}
\subfigure[Real-world environment]{\includegraphics[width=0.24\textwidth]{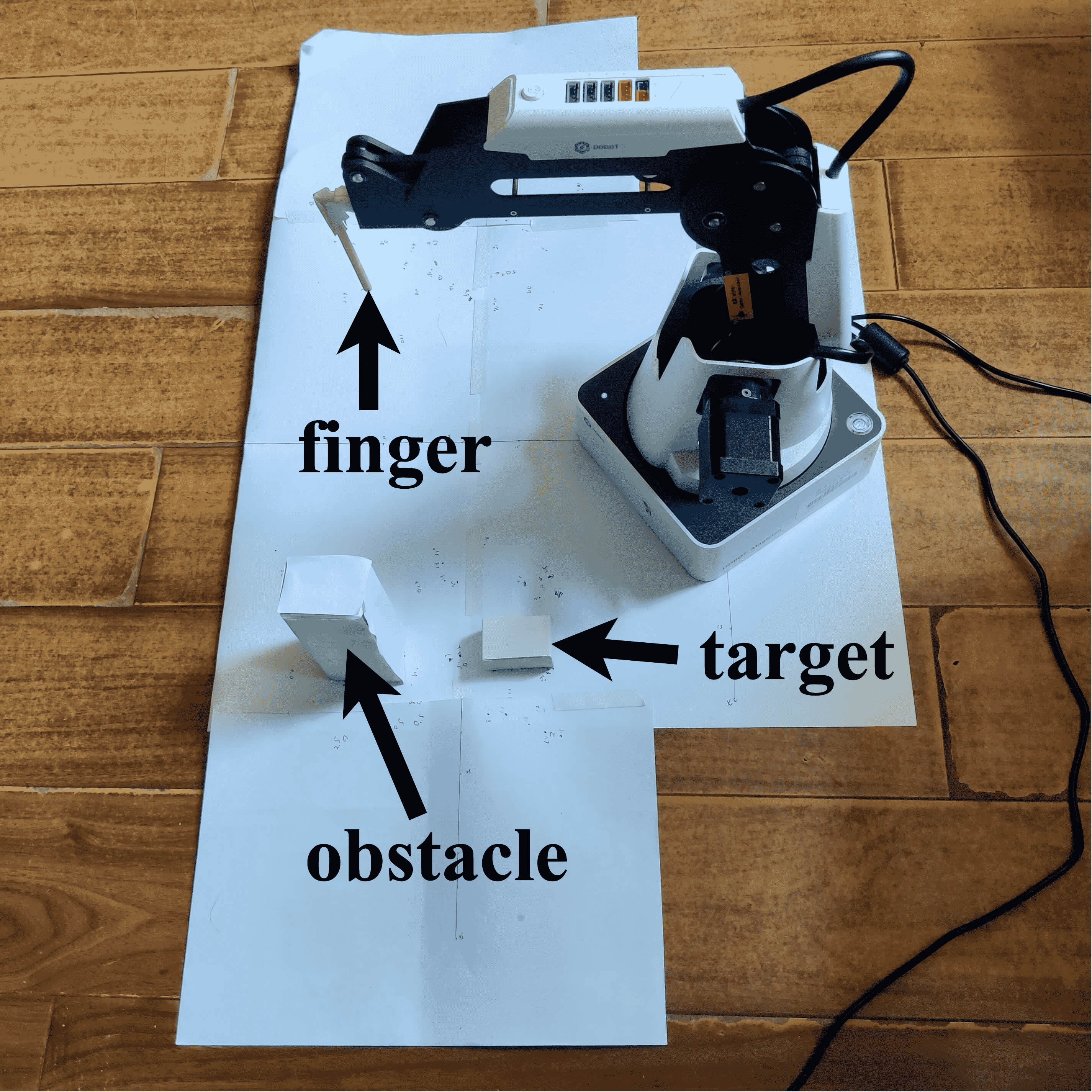}}
\caption{Illustration of the real-world task in (a) simulation environment and (b) real-world environment.}
% the positions of target and obstacle change randomly for each episode and the agent is required to move the ``finger'' to the target while avoiding hitting on the obstacle.
\label{fig:real-dobot}
\end{figure}

\textcolor{black}{We present the success rate of MFRS and all the baseline methods training on the real-world task in Fig.~\ref{fig:res-real}, of which the mean across $5$ runs is plotted as the bold line with $90\%$ bootstrapped confidence intervals of the mean painted in the shade.
% where the mean of success rate per episode across five runs is plotted as the bold line with $95\%$ bootstrapped confidence intervals of the mean (shaped).
Being consistent with the experimental results in \ref{compare-exp}, MFRS outperforms all other baselines with a significantly larger success rate on the real-world task in simulation.
% At the end of training process, MFRS achieves a success rate around $80\%$ while baseline methods gets less than half of the performance of our proposed method.
}

\begin{figure}[tb]
\centering
\includegraphics[width=0.8\columnwidth]{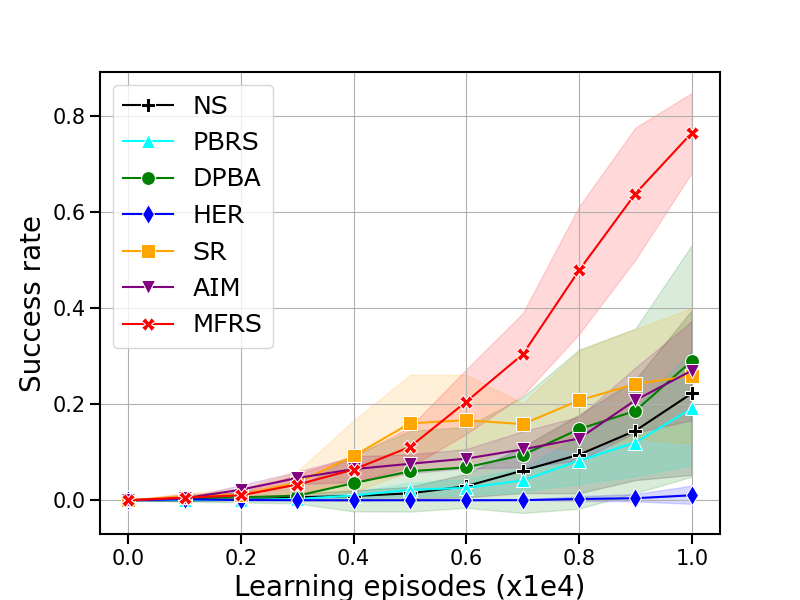}
\caption{\textcolor{black}{Success rate in each learning episode of all investigated methods training on the real-world task in simulation environment.}}
\label{fig:res-real}
\end{figure}

\subsection{Validation in the Real-World}
\textcolor{black}{To figure out whether MFRS performs well and beats the baseline methods on the real robot, we evaluate the success rate and average timesteps over $20$ different testing episodes for each run of each method in the training phase, in which the positions of the dynamic target and obstacle are randomly generated and set accordingly.
That is, for each involved algorithm in real-world experiments, we have conducted $100$ random testing episodes with different settings of the target and obstacle in total.}

In practice, we read the stepper motors' rotation angles and the end-effector's coordinate by calling the DLL provided by the Dobot Magician, which are then concatenated to form the agent's state.
The agent's action in each timestep is obtained by the trained policy, and employed on the real robot as incremental angles to move the three joints using the point-to-point (PTP) command mode of Dobot Magician.
During validation, each testing episode starts with the joint angles reset to zero and manually setting the positions of the target and obstacle in the real-world environment that are sampled randomly in simulation, and terminates whenever the ``finger'' reaches the target, or the robot hits the obstacle or at the horizon of $H=200$.
To prevent unsafe behaviors on the real robot, we restrict the target's sample space to half of the reachable area of the ``finger''.
Computation of the action is done on an external computer, and commands are streamed over the radio at 10Hz using a USB virtual serial port as communication.

\begin{table}[tb]
\caption{\textcolor{black}{Numerical results in terms of success rate and average timesteps over 20 testing episodes of all investigated methods in the real-world environment.}}
\centering
\setlength{\tabcolsep}{5mm}\renewcommand\arraystretch{1.2}
\begin{tabular}{c|c|c}
\cmidrule[\heavyrulewidth]{1-3}
Method  & Success Rate         & Average Timesteps \\
\hline 
NS      & $0.25\pm 0.21$       & $105.27\pm 48.12$ \\
PBRS    & $0.21\pm 0.15$       & $106.81\pm 47.47$ \\
DPBA    & $0.30\pm 0.29$       & $98.42\pm 51.18$ \\
HER     & $0.01\pm 0.02$       & $147.62\pm 64.17$ \\
SR      & $0.10\pm 0.08$       & $96.74\pm 39.64$ \\
AIM     & $0.28\pm 0.07$       & $86.78\pm 6.28$ \\
MFRS   	& $\bm{0.84\pm 0.07}$  & $\bm{75.92\pm 3.46}$ \\
\cmidrule[\heavyrulewidth]{1-3}
\end{tabular}
\label{table:res-real}
\end{table}

\textcolor{black}{We report the numerical results in terms of success rate and average timesteps over 20 testing episodes for each investigated method in Table~\ref{table:res-real}, where the mean across five runs using the corresponding policy trained in simulation is presented, and the confidence intervals are the standard errors.
% The confidence intervals are standard errors, and the best performance is marked in boldface.
From Table~\ref{table:res-real}, it can be observed that MFRS achieves a significantly larger success rate and smaller average timesteps compared to all the baselines on the real robots, which is consistent with the results in the simulation experiment.
In addition, our method obtains relatively smaller confidence intervals and standard errors than the baselines. It indicates that MFRS can provide stable learning results when employed on a real robot.
In summary, it is verified that MFRS is able to handle the goal-conditioned tasks in the real-world scenario using the policy trained in the corresponding simulation environment, providing better performance and successfully improving the sample efficiency of the RL algorithm.}

\section{Conclusion}~\label{conclusion}
In this paper, we propose a novel magnetic field-based reward shaping (MFRS) method for goal-conditioned RL tasks, where we consider the dynamic target and obstacles as permanent magnets and build our shaping reward function based on the intensity values of the magnet field generated by these magnets. MFRS is able to provide an explicit and informative optimization landscape for policy learning compared to the distance-based setting.
To evaluate the validity and superiority of MFRS, we use CoppeliaSim to build a simulated 3-D robotic manipulation platform and generate a set of goal-conditioned tasks with various goal dynamics.
Furthermore, we apply MFRS to a physical robot in the real-world environment with the policy trained in simulation.
Experimental results both in simulation and on real robots verify that MFRS significantly improves the sample efficiency in goal-conditioned RL tasks with the dynamic target and obstacles compared to the relevant existing methods.

\textcolor{black}{Our future work will focus on extending MFRS to the scenario with high-dimensional goals using the concept and properties of the magnetic field, and further generalizing to more diversified real-world tasks apart from the area of robotic control.
% Extending MFRS to the practical situations with the obstacles in arbitrary shapes using physics simulation software is also a crucial direction to be addressed.
Another significant direction would be incorporating some perception abilities with the decision-making RL, e.g., equipping the robot with additional sensors to obtain the pose of the dynamic target and obstacles, to fuse MFRS into an end-to-end integrated pipeline for more practical real-world validation of robotic systems.}

\appendix[\textcolor{black}{Proof of the Optimal Policy Invariance of MFRS}]
\begin{theorem}
    Let $M = (\mathcal{S},\mathcal{G},\mathcal{A},\mathcal{T},R,\gamma)$ be the original MDP with the environment reward $R$, and $M' = (\mathcal{S},\mathcal{G},\mathcal{A},\mathcal{T},R',\gamma)$ be the shaped MDP with the shaped reward $R'=R+F$, where the shaping reward function $F$ satisfies Eq.~(\ref{shaping-reward}) with the goal-conditioned potential function $\Phi$ initialized to zero. 
    Let $\pi_{M}^{*}$ and $\pi_{M'}^{*}$ be the optimal policies in $M$ and $M'$, respectively.
    Then, $\pi_{M'}^{*}$ is consistent with $\pi_{M}^{*}$.
\end{theorem}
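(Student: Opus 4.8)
The plan is to show that the two MDPs $M$ and $M'$ induce identical expected returns for \emph{every} policy, from which equality of their optimal policies is immediate. Since $R' = R + F$, for any policy $\pi$ and any initial state $s_0$ the shaped value decomposes as $V_{M'}^{\pi}(s_0) = V_{M}^{\pi}(s_0) + \mathbb{E}_{\pi}\!\left[\sum_{t=0}^{\infty}\gamma^{t} f_t\right]$, so it suffices to prove that the discounted sum of shaping rewards is a policy-independent constant. This is the standard PBRS telescoping argument of \citet{ng1999policy}, adapted to the \emph{dynamic, state-action} potential used by DPBA~\citep{harutyunyan2015expressing, devlin2012dynamic}.

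First I would substitute $f_t = \gamma\Phi_{t+1}(s_{t+1},a_{t+1},g) - \Phi_t(s_t,a_t,g)$ from Eq.~(\ref{shaping-reward}) and regroup the discount factors. Writing $u_t := \gamma^{t}\Phi_t(s_t,a_t,g)$, each term becomes $\gamma^{t} f_t = u_{t+1} - u_t$, so the series telescopes:
\begin{equation}
\sum_{t=0}^{T}\gamma^{t} f_t = u_{T+1} - u_0 = \gamma^{T+1}\Phi_{T+1}(s_{T+1},a_{T+1},g) - \Phi_0(s_0,a_0,g).
\end{equation}
The cancellation is exact because the $+\gamma^{t+1}\Phi_{t+1}(s_{t+1},a_{t+1},g)$ produced at step $t$ is annihilated by the $-\gamma^{t+1}\Phi_{t+1}(s_{t+1},a_{t+1},g)$ produced at step $t+1$ (both invoke the \emph{same} potential $\Phi_{t+1}$ at the same state--action pair), which is exactly why the SARSA-style choice of $a_{t+1}$ by the current policy is what makes the sum collapse along an on-policy trajectory.

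Next I would dispose of the two boundary terms. The lower term vanishes because $\Phi$ is initialized to zero, so $\Phi_0(s_0,a_0,g) = 0$; the upper term vanishes in the limit $T\to\infty$ under the mild assumptions that $\Phi$ is bounded and $\gamma < 1$ (or, in the episodic case, by absorbing the terminal potential). Hence $\sum_{t}\gamma^{t} f_t = 0$ along every trajectory, giving $V_{M'}^{\pi} \equiv V_{M}^{\pi}$ for all $\pi$. Because the value functions coincide policy-wise, $\argmax_{\pi} J_{M'}(\pi) = \argmax_{\pi} J_{M}(\pi)$, and therefore $\pi_{M'}^{*}$ is consistent with $\pi_{M}^{*}$.

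I expect the main obstacle to be the \textbf{action-dependence of the potential}. For a \emph{static} state-action potential the residual boundary term $\Phi(s_0,a_0)$ would depend on the first action and hence on the policy, which is precisely Wiewiora's obstruction to naive $\argmax$-invariance; the dynamic formulation rescues the result only because the zero initialization forces this term to $0$ \emph{independently of} $a_0$. I would therefore take care to carry out the argument trajectory-wise rather than through a single stationary Bellman operator (the reward $F$ is non-stationary), and to state explicitly the boundedness and discounting assumptions that guarantee the tail term $\gamma^{T+1}\Phi_{T+1}$ decays --- these are the two places where the proof could otherwise be incomplete.
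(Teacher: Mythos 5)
Your proof is correct and rests on the same core mechanism as the paper's: substitute $f_t = \gamma\Phi_{t+1}(s_{t+1},a_{t+1},g)-\Phi_t(s_t,a_t,g)$ from Eq.~(\ref{shaping-reward}), telescope the discounted sum, and observe that zero initialization annihilates the only surviving boundary term $\Phi_0(s_0,a_0,g)$. The organization differs, though, and yours is the more careful variant. The paper manipulates optimal action-value functions directly: it expands $Q^{*}_{M'}(s,a,g)=E\left[\sum_{t}\gamma^{t}(r_t+f_t)\right]$, telescopes to $Q^{*}_{M'}=Q^{*}_{M}-\Phi_0(s_0,a_0,g)$, and then passes the $\argmax$ over actions through the (zero) shift. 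Strictly, identifying the leftover term $E\left[\sum_{t}\gamma^{t}r_t\right]$ with $Q^{*}_{M}$ requires first establishing the telescoping identity for an arbitrary fixed policy and only then maximizing, since that expectation is taken under the $M'$-optimal policy rather than the $M$-optimal one; your per-policy formulation ($V^{\pi}_{M'}\equiv V^{\pi}_{M}$ for every $\pi$, hence equal maximizers) makes exactly this repair explicit. You also state two conditions the paper uses silently: boundedness of $\Phi$ with $\gamma<1$ (or episodic termination) so the tail term $\gamma^{T+1}\Phi_{T+1}$ vanishes, and the fact that the boundary term's dependence on $a_0$ --- Wiewiora's obstruction for static state-action potentials --- is precisely why zero initialization, not mere constancy of $\Phi_0$, is what rescues $\argmax$-invariance. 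In short: same telescoping route, but your policy-wise decomposition and explicit boundary analysis close the small gaps in the paper's own write-up.
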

\begin{proof}
% Let $M$ and $M'$ denote the MDP with repect to the native reward function $R$ and shaped reward function $R+F$, respectively.
% Let $M$ and $M'$ denote the MDPs with the original reward function $R$ and the shaped reward function $R+F$, respectively.
% We know that the optimal state-action value function in $M$ is equal to the expectation of cumulative reward under the optimal policy $\pi_{\theta}^{*}$.
According to UVFA~\citep{schaul2015universal}, the optimal goal-conditioned Q-function in $M$ should be equal to the expectation of long-term cumulative reward as
\begin{equation}
    Q^{*}_{M}(s,a,g) = E\left[\sum_{t=0}^{\infty} \gamma^{t}r_{t}\right].
\end{equation}
% This value in $M'$ can be expressed as:
Likewise, the optimal goal-conditioned Q-function in $M'$ can be denoted as

\begin{align}
    Q^{*}_{M'}(s,a,g) &= E\left[\sum_{t=0}^{\infty} \gamma^{t}r_{t}'\right] \nonumber\\
    &= E\left[\sum_{t=0}^{\infty} \gamma^{t}(r_{t} + f_{t})\right].
\end{align}

% \begin{equation}
% \begin{aligned}
%     Q^{*}_{M'}(s,a,g) &= E\left[\sum_{t=0}^{\infty} \gamma^{t}r_{t}'\right] \\
%     &= E\left[\sum_{t=0}^{\infty} \gamma^{t}(r_{t} + f_{t})\right].
% \end{aligned}
% \end{equation}
According to Eq.(\ref{shaping-reward}), we have
\begin{align}
     Q^{*}_{M'}(s,a,g) &= E\Bigg[\sum_{t=0}^{\infty} \gamma^{t}(r_{t}+\gamma \Phi_{t+1}(s_{t+1},a_{t+1},g) \nonumber\\
     &\quad - \Phi_{t}(s_t,a_t,g))\Bigg] \nonumber\\
     &= E\left[ \sum_{t=0}^\infty \gamma^{t}r_{t}\right] + 
     E\left[ \sum_{t=1}^\infty \gamma^{t}\Phi_{t}(s_t,a_t,g)\right] \nonumber\\ 
     &\quad - E\left[ \sum_{t=0}^\infty \gamma^{t}\Phi_t(s_t,a_t,g)\right] \nonumber\\
     &= E\left[\sum_{t=0}^\infty \gamma^{t}r_{t}\right] - \Phi_0(s_0,a_0,g).
\end{align}

% \begin{equation}
% \begin{aligned}
%      Q^{*}_{M'}(s,a,g) &= E\Bigg[\sum_{t=0}^{\infty} \gamma^{t}(r_{t}+\gamma \Phi_{t+1}(s_{t+1},a_{t+1},g) \\
%      &\quad - \Phi_{t}(s_t,a_t,g))\Bigg] \\
%      &= E\left[ \sum_{t=0}^\infty \gamma^{t}r_{t}\right] + 
%      E\left[ \sum_{t=1}^\infty \gamma^{t}\Phi_{t}(s_t,a_t,g)\right] \\ 
%      &\quad - E\left[ \sum_{t=0}^\infty \gamma^{t}\Phi_t(s_t,a_t,g)\right] \\
%      &= E\left[\sum_{t=0}^\infty \gamma^{t}r_{t}\right] - \Phi_0(s_0,a_0,g).
% \end{aligned}
% \end{equation}
Thus, we have $Q^{*}_{M'}(s,a,g) = Q^{*}_{M}(s,a,g) - \Phi_0(s_0,a_0,g)$, where $\Phi_0(s_0,a_0,g)$ denotes the initial value of the goal-conditioned potential function.
The policy is obtained by maximizing the value of goal-conditioned Q-function, and hence the optimal policy in $M'$ can be expressed as
\begin{align}
     \pi_{M'}^{*} &= \argmax_{a\in A} Q_{M'}^{*}(s,a,g) \nonumber\\
     &= \argmax_{a\in \mathcal{A}} \left[Q_{M}^{*}(s,a,g) - \Phi_0(s_0,a_0,g) \right].
\end{align}

% \begin{equation}
% \begin{aligned}
%      \pi_{M'}^{*} &= \argmax_{a\in A} Q_{M'}^{*}(s,a,g) \\
%      &= \argmax_{a\in \mathcal{A}} \left[Q_{M}^{*}(s,a,g) - \Phi_0(s_0,a_0,g) \right].
% \end{aligned}
% \end{equation}
% Therefore, the optimal policy in $M'$ is proved to be equal to the optimal policy in $M$, so that our method for goal-oriented tasks satisfies the property of optimal policy invariance. 
Therefore, if the value of $\Phi$ is initialized to zero, then we have $\pi_{M'}^{*} = \argmax_{a\in A} Q_{M'}^{*}(s,a,g) = \pi_{M}^{*}$, which demonstrates that $\pi_{M'}^{*}$ is consistent with $\pi_{M}^{*}$ in the goal-conditioned RL setting of our method.
\end{proof}

\footnotesize
\bibliographystyle{myIEEEtranN}
\bibliography{mfrs}

\begin{IEEEbiography}
[{\includegraphics[width=1.0in,height=1.25in,clip,keepaspectratio]{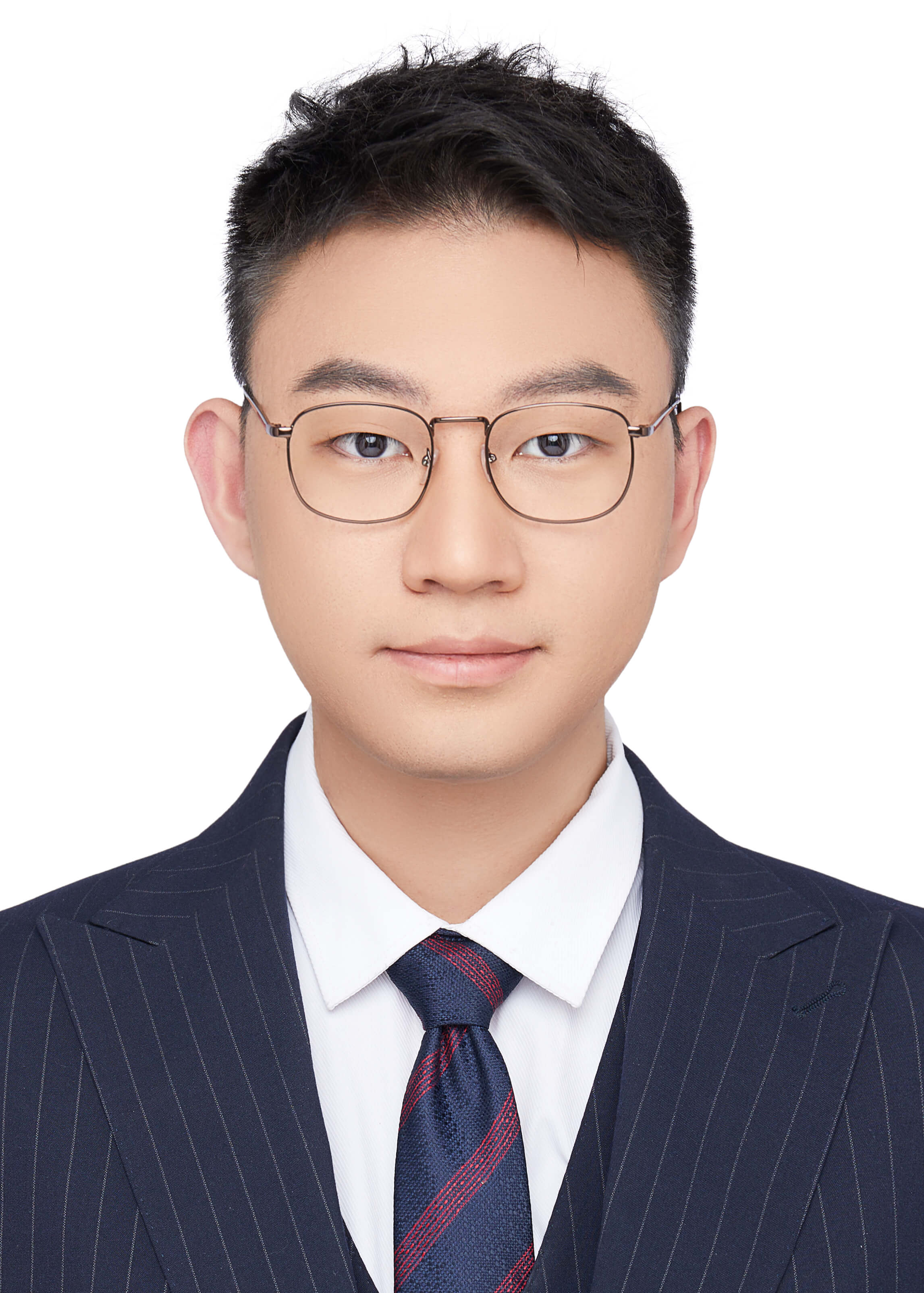}}]{Hongyu Ding}
received the B.E. degree in mechanical engineering from the School of Mechanical and Power Engineering, East China University of Science and Technology, Shanghai, China, in 2021. He is currently pursuing the M.S. degree from the Department of Control Science and Intelligence Engineering, School of Management and Engineering, Nanjing University, Nanjing, China.

His current research interests include reinforcement learning, machine learning, and robotics.
\end{IEEEbiography}

\begin{IEEEbiography}
[{\includegraphics[width=1.0in,height=1.25in,clip,keepaspectratio]{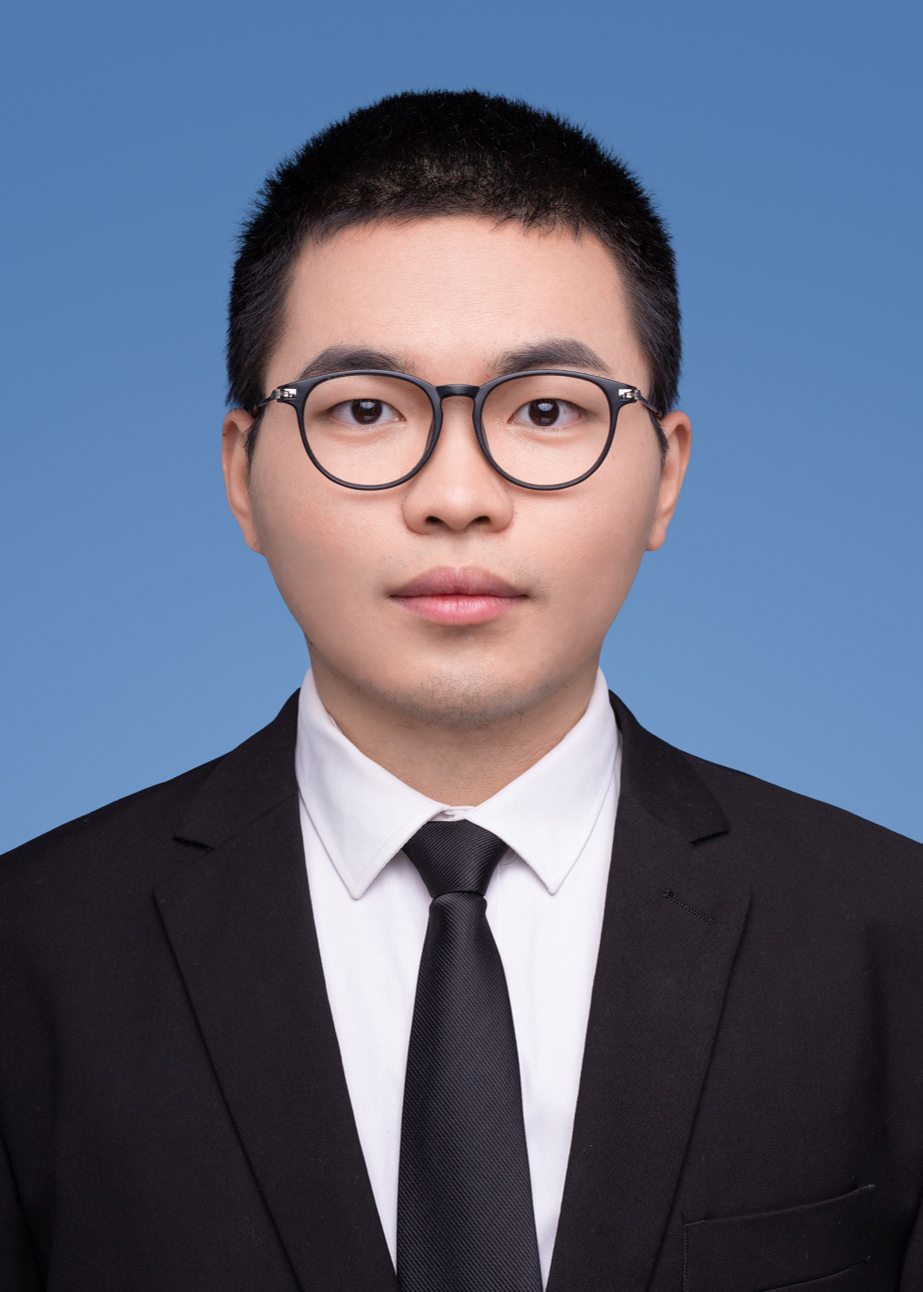}}]{Yuanze Tang}
received the B.E. degree in mechanical engineering, in 2022, from the School of Mechanical and Power Engineering, East China University of Science and Technology, Shanghai, China, where he is currently working toward the Ph.D. degree in digital science and engineering.

His current research interests include machine learning and intelligent life management for engineering equipment.
\end{IEEEbiography}

\begin{IEEEbiography}
[{\includegraphics[width=1.0in,height=1.25in,clip,keepaspectratio]{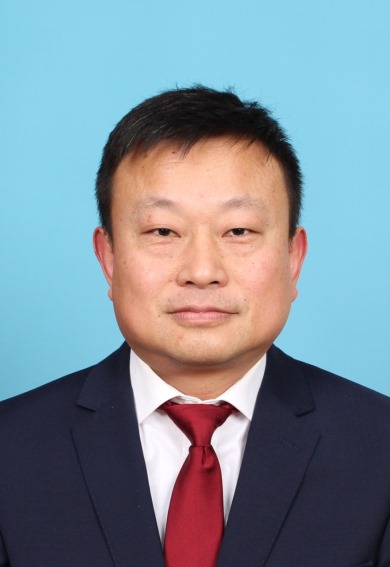}}]{Qing Wu}
received the B.E. and M.E. degrees from the School of Mechanical and Power Engineering, East China University of Science and Technology, Shanghai, China, in 1995 and 2002. He is currently an associate professor in East China University of Science and Technology.

His recent research interests include machine learning, robotics, development and application of embedded system, computer measurement, and control of mechanical and electrical transmission.
\end{IEEEbiography}

\begin{IEEEbiography}
[{\includegraphics[width=1.0in,height=1.25in,clip,keepaspectratio]{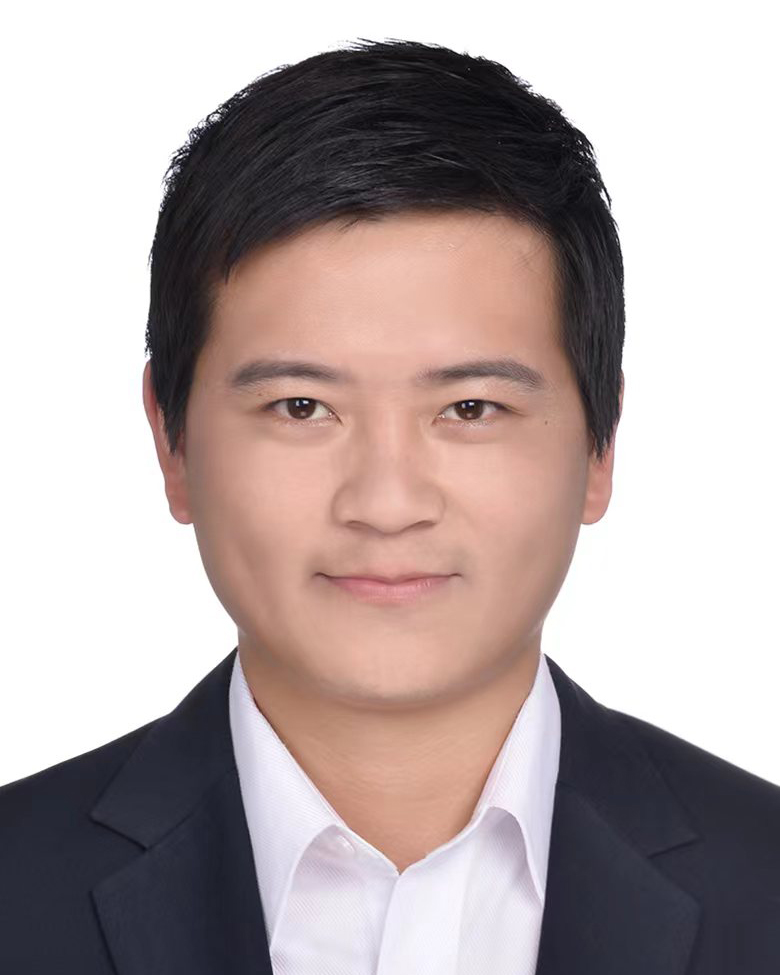}}]{Bo Wang}
(M'16) received the B.Sc. degree in software engineering from Southeast University, Nanjing, China, in 2007, and the M.Sc. and Ph.D. degrees from the Graduate School of Information, Production and Systems, Waseda University, Japan, in 2009 and 2012. He is currently an associate professor with the School of Management and Engineering, Nanjing University, Nanjing, China. He was a Research Assistant of the Global COE Program, Waseda University, Ministry of Education, Culture, Sports, Science and Technology, Japan. He was a Special Research Fellow of the Japan Society for the Promotion of Science (JSPS). He is a Committee Member of Chinese Association of Automation (CAA) Energy Internet Committee, and  a Committee Member of Systems Engineering Society of JiangSu. 

His research interests include, power system planning, renewable generation forecasting, data-driven decision-making, and artificial intelligence algorithms.
\end{IEEEbiography}

\begin{IEEEbiography}[{\includegraphics[width=1.0in,height=1.25in,clip,keepaspectratio]{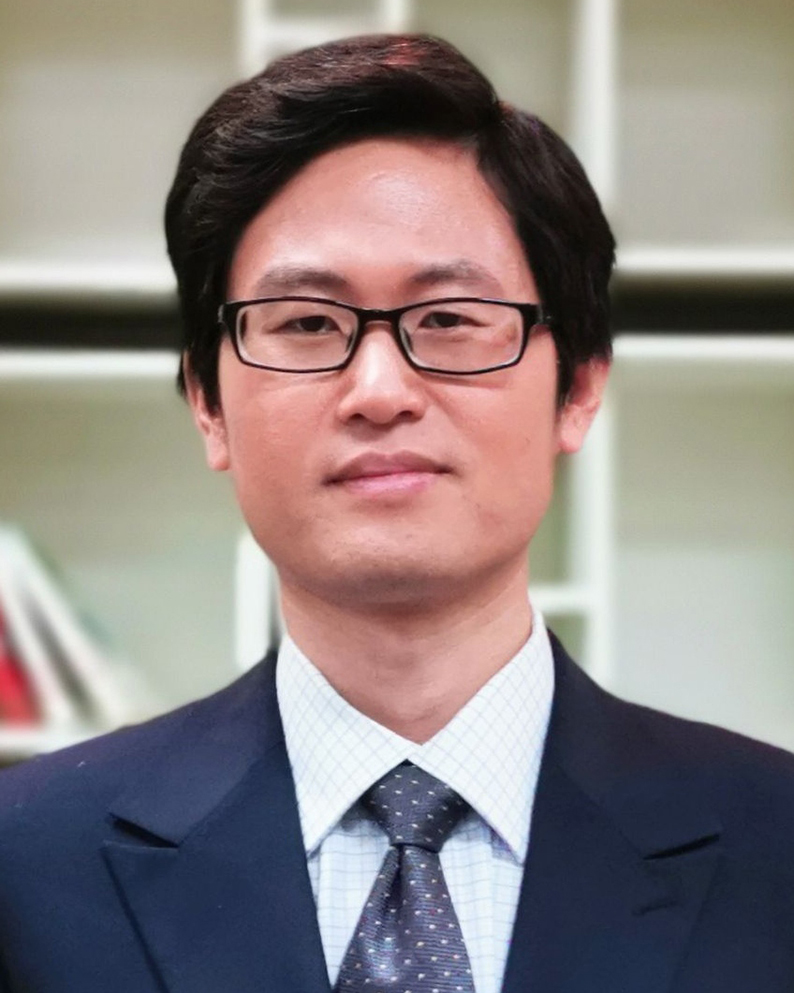}}]{Chunlin Chen}
(S'05-M'06-SM'21) received the B.E. degree in automatic control and Ph.D. degree in control science and engineering from the University of Science and Technology of China, Hefei, China, in 2001 and 2006, respectively. 
He is currently a full professor and the vice dean of School of Management and Engineering, Nanjing University, Nanjing, China. 
He was a visiting scholar at Princeton University, Princeton, USA, from 2012 to 2013. He had visiting positions at the University of New South Wales, Canberra, Australia, and the City University of Hong Kong, Hong Kong, China.

His recent research interests include reinforcement learning, mobile robotics, and quantum control. 
He is the Chair of Technical Committee on Quantum Cybernetics, IEEE Systems, Man and Cybernetics Society.
\end{IEEEbiography}

\begin{IEEEbiography}[{\includegraphics[width=1.0in,height=1.25in,clip,keepaspectratio]{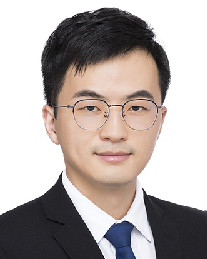}}]{Zhi Wang} (S'19-M'20) received the Ph.D. degree in machine learning from the Department of Systems Engineering and Engineering Management, City University of Hong Kong, Hong Kong, China, in 2019, and the B.E. degree in automation from Nanjing University, Nanjing, China, in 2015. He is currently an Associate Research Fellow with the Department of Control Science and Intelligence Engineering, School of Management and Engineering, Nanjing University, Nanjing, China. He holds visiting positions at the University of New South Wales, Australia and the State Key Laboratory of Management and Control for Complex Systems, Institute of Automation, Chinese Academy of Sciences, China.
	
His current research interests include reinforcement learning, machine learning, and robotics.
He served as the Associate Editor for special sessions of IEEE International Conference on Systems, Man, and Cybernetics 2021 and 2022, and IEEE International Conference on Networking, Sensing, and Control 2020.
\end{IEEEbiography}

\end{document}